\documentclass[lettersize,journal]{IEEEtran}

\usepackage[colorlinks=true,
            linkcolor=blue,
            citecolor=green,
            urlcolor=magenta]{hyperref}

\usepackage{makecell}

\ifCLASSINFOpdf
   \usepackage[pdftex]{graphicx}
\else
   \usepackage[dvips]{graphicx}
\fi
\usepackage{amsmath}
\usepackage{amssymb}
\usepackage{amsthm}

\usepackage[]{footmisc}

\usepackage{multirow}

\usepackage{acro}
\DeclareAcronym{ire}{
  short = IRE,
  long  = interactive robotic exploration,
}

\DeclareAcronym{psm}{
  short = PSM,
  long  = patient-side manipulator,
}

\DeclareAcronym{ecm}{
  short = ECM,
  long  = endoscopic camera manipulator,
}
\DeclareAcronym{dof}{
  short = DoF,
  long  = degree of freedom,
  long-plural = degrees of freedom,
  short-plural = DoFs 
}

\DeclareAcronym{ltv}{
    short = LTV,
    long = latent topology variable
}

\DeclareAcronym{ramis}{
  short = RAMIS,
  long  = robot-assisted minimally invasive surgery,
}

\DeclareAcronym{dvrk}{
    short = dVRK,
    long  = \textit{da Vinci} research kit,
    sort  = dVRK
}

\DeclareAcronym{scm}{
    short = SCM,
    long  = structural causal model
}

\DeclareAcronym{dom}{
    short = DOM,
    long  = deformable object manipulation
}

\DeclareAcronym{ipp}{
    short = IPP,
    long  = informative path planning
}

\DeclareAcronym{fem}{
    short = FEM,
    long  = finite element method
}

\DeclareAcronym{sam}{
    short = SAM2,
    long  = Segment Anything Model 2
}

\DeclareAcronym{dtc}{
    short = DTC,
    long  = deformation transmission control
}

\DeclareAcronym{gp}{
    short = GP,
    long  = Gaussian process
}

\DeclareAcronym{mpc}{
    short = MPC,
    long  = model predictive control
}

\DeclareAcronym{ce}{
    short = CE,
    long  = causal effect
}

\DeclareAcronym{eig}{
    short = EIG,
    long  = expected information gain
}

\DeclareAcronym{qp}{
    short = QP,
    long  = quadratic programming
}

\DeclareAcronym{dtr}{
    short = DTR,
    long  = deformation transmission ratio
}

\usepackage{tikz, pgfplots}
\usepackage{pgfplotstable}
\usetikzlibrary{arrows.meta, positioning, shapes.geometric, calc, decorations.pathreplacing}

\definecolor{techblue}{RGB}{0, 113, 188}   % Decoupled Mode
\definecolor{techorange}{RGB}{217, 83, 25} % Blocked Mode
\definecolor{overlap}{RGB}{140, 80, 140}   % 混合区域的紫色示意
\definecolor{axisgrey}{RGB}{80, 80, 80}

\pgfmathdeclarefunction{gauss}{2}{%
  \pgfmathparse{1/(#2*sqrt(2*pi))*exp(-((x-#1)^2)/(2*#2^2))}%
}

\tikzset{
    midarrow/.style={
        decoration={
            markings,
            mark=at position #1 with {\arrow{Stealth[scale=1.2]}}
        },
        postaction={decorate}
    }
}

\usepackage{threeparttable}
\usetikzlibrary{positioning}
\usetikzlibrary{calc}

\newtheorem{thm}{Theorem}
\newtheorem{proposition}[thm]{Proposition}
\usepackage{mathtools}
\usepackage{soul}
\usepackage{siunitx}

\makeatletter
\AtBeginDocument{\let\hl\@firstofone}
\makeatother

\DeclareMathOperator*{\argmin}{arg\,min}
\DeclareMathOperator*{\argmax}{arg\,max}

\usepackage{algpseudocode}
\usepackage{algorithm}

\ifCLASSOPTIONcompsoc \usepackage[caption=false,font=normalsize,labelfont=sf,textfont=sf]{subfig}
\else
 \usepackage[caption=false,font=footnotesize]{subfig}
\fi
\usepackage{siunitx}
\ifdefined\unit\else
  \ifdefined\NewCommandCopy
    \NewCommandCopy\unit\si
  \else
    \NewDocumentCommand\unit{O{}m}{\si[#1]{#2}}
  \fi
\fi

\pgfplotsset{compat=1.16}
\begin{document}
%
% paper title
% Titles are generally capitalized except for words such as a, an, and, as,
% at, but, by, for, in, nor, of, on, or, the, to and up, which are usually
% not capitalized unless they are the first or last word of the title.
% Linebreaks \\ can be used within to get better formatting as desired.
% Do not put math or special symbols in the title.
\title{CADeT: Causal-Aware Deformation Transmission for Indirect Robotic Manipulation of Soft Tissue}

% author names and affiliations
% transmag papers use the long conference author name format.

\author{Junlei Hu, Dominic Jones~\IEEEmembership{Member,~IEEE}, Pietro Valdastri,~\IEEEmembership{Fellow,~IEEE}
        % <-this % stops a space
\thanks{This work was supported in part by the European Research Council (ERC) through the European Union’s Horizon 2020 Research and Innovation Programme under Grant 818045, in part by the Engineering and Physical Sciences Research Council (EPSRC) under Grant EP/V047914/1, and in part by the National Institute for Health and Care Research (NIHR) Leeds Biomedical Research Centre (BRC) (NIHR203331). }
\thanks{Junlei Hu, Dominic Jones, Pietro Valdastri are with the STORM Lab, Institute of Autonomous Systems and Sensing (IRASS), School of Electronic and Electrical Engineering,
University of Leeds, Leeds, UK. Email:{\tt \{j.hu2, d.p.jones, p.valdastri\}@leeds.ac.uk}}
}

% \thanks{Manuscript received December 1, 2012; revised August 26, 2015. 
% This work is supported by ...Corresponding author: M. Shell (email: http://www.michaelshell.org/contact.html).}}

% The paper headers
\markboth{IEEE TRANSACTIONS ON ROBOTICS,~Vol.~~, No.~~, August~2026}%
{Hu \MakeLowercase{\textit{et al.}}: A Sample Article Using IEEEtran.cls for IEEE Journals}
% The only time the second header will appear is for the odd numbered pages
% after the title page when using the twoside option.
% 
% *** Note that you probably will NOT want to include the author's ***
% *** name in the headers of peer review papers.                   ***
% You can use \ifCLASSOPTIONpeerreview for conditional compilation here if
% you desire.

% If you want to put a publisher's ID mark on the page you can do it like
% this:
%\IEEEpubid{0000--0000/00\$00.00~\copyright~2015 IEEE}
% Remember, if you use this you must call \IEEEpubidadjcol in the second
% column for its text to clear the IEEEpubid mark.

% use for special paper notices
%\IEEEspecialpapernotice{(Invited Paper)}

% for Transactions on Magnetics papers, we must declare the abstract and
% index terms PRIOR to the title within the \IEEEtitleabstractindextext
% IEEEtran command as these need to go into the title area created by
% \maketitle.
% As a general rule, do not put math, special symbols or citations
% in the abstract or keywords.
\maketitle

\begin{abstract}
Indirect manipulation of deep-seated deformable anatomy \hl{inaccessible to the robot} is challenging in \ac{ramis} \hl{because intervening tissues spatially filter deformation
transmission. 
Passive observations can be ambiguous because the Decoupled and Blocked modes may produce similar motion responses.}
We propose CADeT, a causal-aware deformation transmission framework that integrates \ac{scm} with active sensing
\hl{to infer a latent transmission mode and estimate a state-dependent adhesion Jacobian online. During normal manipulation, control actions update the mode belief; when ambiguity persists, an additional probing action is selected to improve mode distinguishability.}
\hl{The mode belief and learned Jacobian are incorporated into a belief-aware model predictive controller for indirect target-shape control.}
\hl{Validation in simulation and on the} \ac{dvrk}\hl{, using phantom and ex vivo porcine tissues, shows higher mode-identification accuracy and faster shape-error convergence than the evaluated model-free and model-based baselines.}
\hl{These results show that active sensing improves mode identification and indirect deformation control under the evaluated conditions.}\end{abstract}

% Note that keywords are not normally used for peer review papers.
\begin{IEEEkeywords}
Robotic laparoscopy, robotic manipulation of soft objects, causal inference, interactive robotic sensing
\end{IEEEkeywords}

% make the title area

% To allow for easy dual compilation without having to reenter the
% abstract/keywords data, the \IEEEtitleabstractindextext text will
% not be used in maketitle, but will appear (i.e., to be "transported")
% here as \IEEEdisplaynontitleabstractindextext when the compsoc 
% or transmag modes are not selected <OR> if conference mode is selected 
% - because all conference papers position the abstract like regular
% papers do.
\IEEEdisplaynontitleabstractindextext
% \IEEEdisplaynontitleabstractindextext has no effect when using
% compsoc or transmag under a non-conference mode.

% For peer review papers, you can put extra information on the cover
% page as needed:
% \ifCLASSOPTIONpeerreview
% \begin{center} \bfseries EDICS Category: 3-BBND \end{center}
% \fi
%
% For peerreview papers, this IEEEtran command inserts a page break and
% creates the second title. It will be ignored for other modes.
\IEEEpeerreviewmaketitle

\acresetall

\section{Introduction}
% The very first letter is a 2 line initial drop letter followed
% by the rest of the first word in caps.
% 
% form to use if the first word consists of a single letter:
% \IEEEPARstart{A}{demo} file is ....
% 
% form to use if you need the single drop letter followed by
% normal text (unknown if ever used by the IEEE):
% \IEEEPARstart{A}{}demo file is ....
% 
% Some journals put the first two words in caps:
% \IEEEPARstart{T}{his demo} file is ....
% 
% Here we have the typical use of a "T" for an initial drop letter
% and "HIS" in caps to complete the first word.
\IEEEPARstart{R}{OBOT-ASSISTED} minimally invasive surgery (RAMIS) is progressively evolving from resection-centric tasks toward complex exploration and reconstruction procedures that require the manipulation of deep-seated anatomical structures \cite{dupont2021decade}. 
Direct manipulation of deep or delicate organs is often infeasible due to anatomical constraints and injury risk.
\hl{In pancreatic exploration or mesenteric handling, the target may lie beneath or behind more accessible tissues while remaining partially visible for deformation tracking.}
To expose or manipulate these targets, surgeons must manipulate accessible proxy organs to transmit forces to the target anatomy indirectly.
\hl{This indirect-manipulation paradigm differs from classical robotic manipulation, where the end-effector directly contacts the object of interest.}
While direct \ac{dom} has been extensively studied, \hl{the intervening medium creates a compound mechanical system that challenges standard modelling approaches} \cite{yin2021modeling}.

The fundamental physical challenge in indirect manipulation arises from deformation transmission through the intermediary proxy tissue.
Unlike rigid mechanisms, soft biological tissue attenuates localized deformation as it propagates through the viscoelastic medium and can therefore be viewed as a spatial low-pass filter, consistent with Saint-Venant's Principle \cite{timoshenko2012theory}.
Consequently, surface actuation produces smoother and weaker responses at the target, reducing the effective control authority and making fine-grained shape servoing more challenging \cite{hu2023occlusion, yang2023model}.

\begin{figure*}[!hbt]
    \centering
    \addtolength{\abovecaptionskip}{-10pt}
    \includegraphics[width=0.9\linewidth]{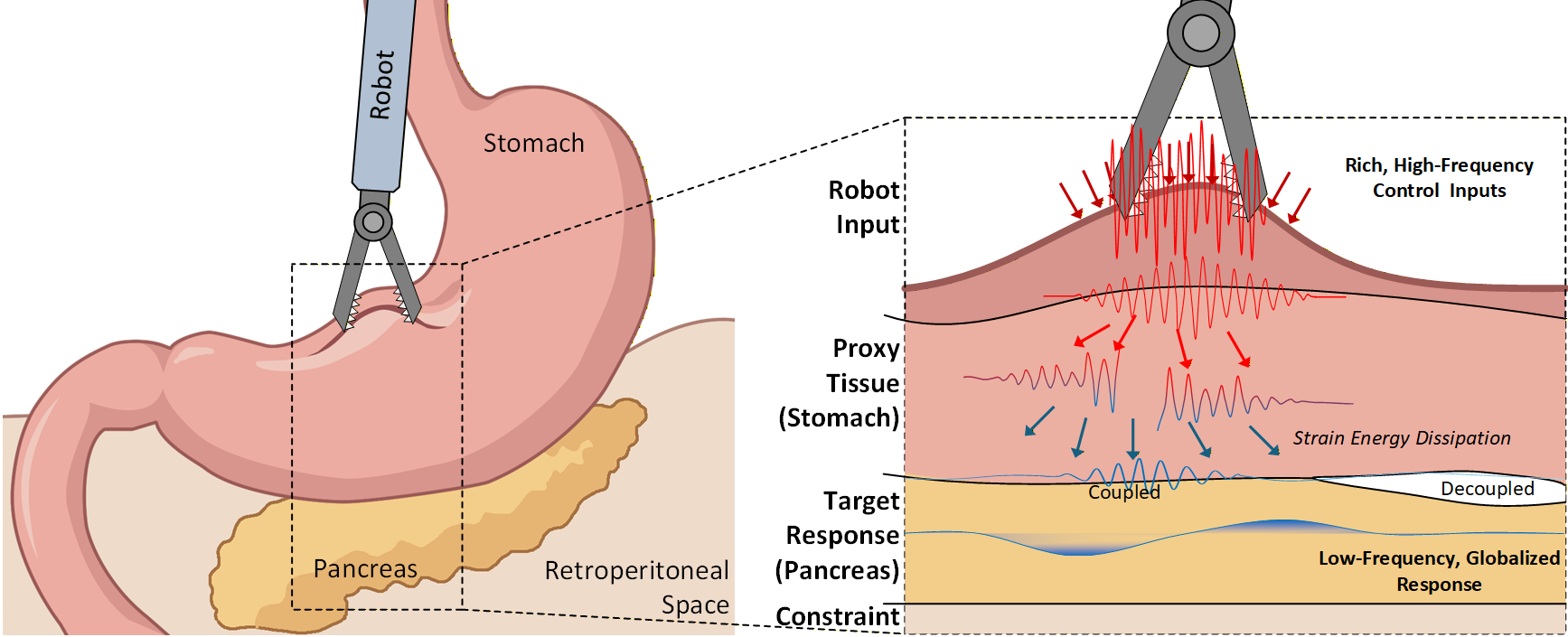}
    \caption{Conceptual illustration of indirect deformation transmission. High-frequency robotic inputs at the proxy surface (stomach) are attenuated by viscoelastic strain-energy dissipation, producing a low-frequency, globalized deformation whose transmission to the target organ (pancreas) depends on the latent transmission mode.}
    \label{fig:figure1}
\end{figure*}

More critically, this spatial filtering introduces an \hl{observational ambiguity} within the perception--action loop.
Due to the attenuation of distinctive high-frequency geometric and haptic features, the observed tissue deformation may no longer uniquely indicate the underlying physical interaction.
For example, \hl{limited or absent motion of the visible target region may indicate either ineffective proxy--target transmission or blockage by surrounding anatomy, such as the retroperitoneal wall.
Although these conditions produce similar visual responses, they require different control actions: continued manipulation may retract a coupled target but increase tissue compression when the target is blocked.}

Existing \ac{dom} methods, including \ac{fem} simulations
\cite{saghour2025dual, han2025quasi} and model-free Jacobian estimation
\cite{navarro2016automatic, zhu2021vision}, typically assume a continuous and globally consistent mapping between actuator motion and object deformation.
\hl{In indirect manipulation, however, this relationship can become one-to-many and non-smooth as contact states switch, such as through adhesion failure or collision.
These methods do not explicitly model the latent transmission mode, so transitions between unconstrained and contact-constrained regimes can result in model mismatch or undesired interactions.}

To resolve this \hl{observational ambiguity}, we propose CADeT, a framework for
\textbf{C}ausal-\textbf{A}ware \textbf{De}formation
\textbf{T}ransmission \hl{that integrates} \ac{scm}
\hl{with active sensing}.
\hl{The SCM represents how the latent transmission mode modulates the proxy-to-target deformation pathway and provides the structural basis for the mode-conditioned observation models.
These models are used in a Bayesian update to maintain the mode belief.}
\hl{CADeT continuously updates the mode belief using tissue responses induced by normal control actions. 
When these observations do not distinguish the Decoupled and Blocked modes, the robot applies a small additional probing action. 
Within the SCM, this probing action is treated as an intervention, allowing its predicted response to be compared across the competing modes.}
Building on this inference, we introduce a \ac{dtc} strategy that uses the mode belief and learned coupling model to regulate target deformation through the proxy organ.
The main contributions of this paper are:
\begin{enumerate} 
    \item \hl{An} \ac{scm}\hl{-based transmission-mode inference framework for indirect soft-tissue manipulation. We show that passive observation may be insufficient to distinguish the Decoupled and Blocked modes, and use active sensing with additional probing actions to infer the latent transmission mode online.} 
    \item A \ac{dtc} scheme for indirect shape control that integrates the mode belief and a \ac{gp} estimate of
the adhesion Jacobian within a model predictive control framework. \hl{The controller weights the predicted deformation transmission by the belief in the Coupled Mode and uses the resulting model to plan the proxy motion.}
    \item Experimental validation on the \ac{dvrk} using both phantom and ex vivo tissues, demonstrating transmission-mode identification and indirect shape control under different interaction conditions.
\end{enumerate}

\section{Related Work}

\subsection{Modelling and Control of Deformable Objects}
Effective manipulation of soft tissues requires a reliable mapping between robot actuation and tissue deformation.
Existing approaches can be broadly categorized into physically based and data-driven methods \cite{yin2021modeling}.

Physically based methods, such as \ac{fem} \cite{saghour2025dual, koessler2021efficient} or mass-spring systems \cite{kita2011clothes, makiyeh2023shape}, model tissue deformation from mechanical principles.
These methods can achieve high accuracy but require tissue properties, such as stiffness or elasticity, that are difficult to measure accurately in vivo.
Although real-time simulation frameworks (e.g., SOFA \cite{faure:hal-00681539}) have improved computational efficiency, complex interactions, particularly tissue collisions, can remain too computationally demanding for fast robotic control loops.

Data-driven \cite{hu20193} and model-free methods \cite{hu2023occlusion, yang2023model, navarro2016automatic, lagneau2020active, zhu2021vision, shetab2023lattice, hu2026multiscale}, on the other hand, avoid complex physical models.
Many model-free approaches estimate a deformation Jacobian online to map robot motion to tissue deformation \cite{navarro2016automatic, lagneau2020active, zhu2021vision, shetab2023lattice}.
Classical numerical approaches, such as the Broyden update \cite{navarro2016automatic}, incrementally update this relationship during manipulation.

More recently, learning-based approaches have been employed to model complex deformations.
Deep Neural Networks \cite{hu20193} have been used to model global tissue dynamics, while Gaussian Process Regression (GPR) \cite{hu2018three} can predict deformation while estimating uncertainty.
Shape representations have also evolved from simple surface points \cite{hu2023occlusion} to more advanced features such as Fourier descriptors \cite{navarro2017fourier} or latent representations \cite{zhou2021lasesom}.

\hl{Learning-based control policies have also been explored for} \ac{dom}.
\hl{Reinforcement learning has been used to learn manipulation policies from sensory interaction} \cite{zhao2025learning}, \hl{while demonstration-enhanced deep reinforcement learning has been applied to deformable-object manipulation tasks} \cite{wang2025robot}. 
\hl{Imitation learning has also been investigated for dual-arm fabric manipulation} \cite{zhu2025dual}, and learning-based predictive control has been developed for constrained deformable linear object manipulation \cite{tang2024learning}.

\subsection{Indirect Manipulation and Transmission Dynamics}
\hl{Indirect manipulation through intermediary bodies has been studied across robotic domains.
In microrobotics, optical tweezers or magnetic fields can transport micro-scale objects through fluid media} \cite{thakur2014indirect}.
\hl{Such media are typically homogeneous and well modelled, yielding relatively stable and predictable transmission.}

Indirect manipulation has also been investigated in macroscopic robotic manipulation, particularly for rigid objects. 
Representative examples include pushing-based manipulation, where a robot moves an object by applying forces at non-grasped contact points \cite{hogan2020reactive, hogan2018reactive, suresh2021tactile, sodhi2021learning}, and tool-mediated manipulation, where an external tool or a support surface serves as a mechanical intermediary between the robot and the target object \cite{kim2022active, shirai2023tactile}.
In these scenarios, motion or force transmission is governed mainly by rigid-body contact mechanics and friction, while contact modes are often discrete and explicitly modelled \cite{hogan2020reactive, doshi2022manipulation}.
As a result, interaction-state changes, such as sticking, sliding, or contact loss, can often be inferred from force or motion measurements.

\hl{In contrast, surgical indirect manipulation transmits motion through soft, heterogeneous tissues acting as spatial low-pass filters} \cite{timoshenko2012theory}.
\hl{Proxy-surface motions and forces are attenuated during propagation, reducing amplitude and high-frequency cues at the target and limiting fine control and haptic feedback.}

Related work on manipulation under constraints has investigated interaction with confined environments or obstacle avoidance \cite{huang2023deformable}. However, constraints are typically treated as static and known, and changes in contact state are not explicitly modelled. 
In indirect manipulation, the key difficulty lies in distinguishing between qualitatively different interaction modes, such as loss of transmission between proxy and target versus a target that is immobilized by an external constraint. 
From a visual standpoint, both scenarios may manifest as an absence of target motion, making them difficult to disambiguate using standard controllers.

\subsection{Active Sensing and Causal Inference}
When passive observation is insufficient to characterize complex and unstructured environments, robots can actively interact with them to acquire informative sensory data.
In surgical robotics, active perception has been explored for mechanical property identification, geometric exploration, and model refinement.

Mechanical property identification uses controlled interactions to estimate unobservable physical parameters.
Robotic palpation has been extensively studied for tissue-stiffness and friction estimation \cite{nichols2015methods, yan2021fast}, while active tactile exploration has enabled probabilistic classification of soft-tissue inclusions \cite{scimeca2022action}.
Active manipulation has also been used to resolve geometric ambiguity.
For instance, Shinde et al. \cite{shinde2024jiggle} used active probing to estimate deformable-tissue boundary parameters, while dynamic deformation tracking has maintained perception during tool--tissue contact \cite{li2020super}.
Related strategies include autonomous tissue retraction to reveal occluded anatomical structures \cite{attanasio2020autonomous} and exploratory motions to distinguish foreground from background tissues.

Active perception has further been used for information-theoretic model refinement in control tasks \cite{hu2026autonomous}.
In visual servoing and deformation control, exploratory actions maximize information gain about uncertain model parameters, such as the deformation Jacobian, thereby improving convergence and robustness when initial models are inaccurate.

While these methods address parametric uncertainty, such as unknown stiffness, and geometric uncertainty, such as unknown boundaries, they generally assume a fixed interaction structure.
Structural uncertainty, such as whether a proxy organ is physically coupled to the target, is rarely considered and requires reasoning beyond parameter estimation.

Causal modelling provides a principled framework for reasoning about such \hl{hidden interaction mechanisms}.
Building on Pearl's theory of causality \cite{pearl2009causality}, recent robotic studies have applied causal inference to fault diagnosis and tool-use reasoning, primarily at the planning or task level on rigid objects \cite{ahmed2020causalworld, xia2025cage, lee2021causal}.
In contrast, our framework represents probing actions as interventions within the \ac{scm} and uses the resulting tissue responses for online mode inference during low-level deformable-object control.
The resulting mode belief distinguishes the Decoupled and Blocked modes and allows the controller to adapt to changing interaction conditions.

\section{Problem Formulation}
We consider a surgical scene in which a vision-guided multi-arm robot manipulates a proxy organ (e.g., stomach) to indirectly deform a deep-seated target organ (e.g., pancreas) in \ac{ramis}. 
\hl{The target is not directly manipulated by the robot but is assumed to remain at least partially visible in the endoscopic view, with sufficient visual features available for tracking.}
The goal is to control the proxy organ to achieve a desired deformation of \hl{the visible target region while accounting for possible environmental constraints}.
\hl{The observable proxy and target configurations at discrete time} $t$ are
denoted by $\mathbf{g}_t \in \mathbb{R}^{n_\mathrm{g}}$ and
$\mathbf{p}_t \in \mathbb{R}^{n_\mathrm{p}}$, \hl{respectively, and are represented by visual features extracted from the visible tissue regions.
Accordingly,} $\mathbf{p}_t$ \hl{represents the observed target region rather than the complete organ geometry. Completely occluded targets are not considered in the current formulation.}
The relationships between the camera and robot, as well as between the robot and the proxy organ, are calibrated before manipulation, as shown in Fig.~\ref{fig:placeholder}.

\begin{figure}[t]
    \centering
    \addtolength{\abovecaptionskip}{-10pt}
    \input{illustration}
    \caption{Kinematic configuration for indirect manipulation in \acs{ramis}. 
    The robot actuates the proxy tissue while the target state is observed visually, with an external constraint representing the Blocked Mode.}
    \label{fig:placeholder}
\end{figure}

The robotic system directly manipulates only the proxy organ. 
\hl{We assume that a stable grasp on the proxy tissue has been established before autonomous manipulation. Grasp planning is beyond the scope of this work, and CADeT performs local deformation control under the current grasp.}

Let $\mathbf{u}_t \in \mathbb{R}^{m}$ denote the \hl{incremental Cartesian position command applied to the robot end-effector over one control interval}.
\hl{The evolution of the proxy organ is governed by its viscoelastic dynamics and the applied control action:}
\begin{equation} \label{eq:proxy_dynamics}
    \mathbf{g}_{t+1} = f_\mathrm{g}(\mathbf{g}_t, \mathbf{u}_t) + \mathbf{w}_t^\mathrm{g} 
\end{equation}
where $\mathbf{w}_t^{\mathrm g}\in\mathbb{R}^{n_{\mathrm g}}$ represents bounded stochastic disturbances from respiration and cardiac motion.
Crucially, the target state $\mathbf{p}_t$ is under-actuated; its motion is not directly controlled but induced by deformation transmission from $\mathbf{g}_t$.

\subsection{Hybrid Deformation Transmission Dynamics}
The fundamental challenge in indirect manipulation is modelling the complex mapping from the proxy organ state to the target response.
Unlike direct manipulation, where the end-effector maintains a known mechanical connection with the object and gripper motion therefore induces deformation, indirect manipulation relies on an uncertain and potentially intermittent transmission chain.

We model this process using a hybrid transmission function $\Phi$, which depends on continuous coupling parameters $\theta \in \mathbb{R}^{d}$ (e.g., local stiffness or deformation Jacobian) and a discrete, unobservable latent transmission mode $\tau_t \in \{0,1,2\}$ that governs the transmission behaviour.
The transmission mode is assumed to remain locally persistent over consecutive control cycles, but may change as the local interaction condition changes, such as with the onset or release of an external constraint.
The target organ dynamics are formulated as
\begin{equation} \label{eq:target_dynamics}
\mathbf{p}_{t+1} = \Phi(\mathbf{p}_t, \mathbf{g}_t, \dots)
= \mathbf{p}_t + \mathcal{T}(\mathbf{g}_t, \mathbf{g}_{t+1}, \mathbf{u}_t; \theta, \tau_t) + \mathbf{w}_t^\mathrm{p},
\end{equation}
where $\mathcal{T}(\bullet)$ denotes the deformation transmission term, parametrized by $\theta$ and conditioned on $\tau_t$, while $\mathbf{w}_t^\mathrm{p}$ represents bounded, unmodelled disturbances.

The system evolves according to three modes. 
First, in the \emph{Decoupled Mode} ($\tau_t = 0$), adhesion between the proxy and target is insufficient or broken. 
Consequently, the mechanical transmission chain is effectively severed; deformation of the proxy induces negligible motion in the target, characterized by 
\[\mathcal{T}( \bullet \mid \tau_t = 0) \approx \mathbf{0}.\] 
In this state, the target remains stationary while the proxy experiences minimal reaction forces.

Conversely, under the \emph{Coupled Mode} ($\tau_t = 1$), the organs are mechanically linked via adhesion, and deformation is transmitted through a viscoelastic coupling. 
This transmission is modelled as
\begin{equation} \label{eq:tau=1}
    \mathcal{T}( \bullet \mid \tau_t = 1) \approx \mathbf{J}_{\mathrm{adh}}(\mathbf{g}_t; \theta) \cdot (\mathbf{g}_{t+1} - \mathbf{g}_t),
\end{equation}
where $\mathbf{J}_{\mathrm{adh}} \in \mathbb{R}^{n_\mathrm{p} \times n_\mathrm{g}}$ is a state-dependent adhesion Jacobian characterizing the instantaneous deformation transfer.

\hl{Finally, the system may enter a Blocked Mode} $(\tau_t=2)$,
\hl{where mechanical coupling remains present but an external constraint
locally restricts deformation transmission along the current manipulation
direction. The resulting target response can therefore become negligible
despite continued proxy deformation, such that}
\[
\mathcal{T}(\bullet \mid \tau_t=2) \approx \mathbf{0}.
\]
\hl{The transmission mode describes the local response under the current interaction direction; therefore, the system may transition between the Coupled and Blocked Modes as the manipulation direction or constraint condition changes.}
This formulation explicitly captures the response bifurcation inherent in indirect manipulation: the same proxy deformation
$\Delta\mathbf{g}_t=\mathbf{g}_{t+1}-\mathbf{g}_t$ may induce non-negligible target motion, or none at all, depending on the latent transmission mode.

\subsection{Observational Ambiguity under Spatial Filtering} \label{sec:casual_ambiguity}
The modelling challenge is further compounded by the spatial low-pass filtering effect of soft proxy tissue. 
High-frequency interaction cues, such as localized stress concentrations caused by collisions, are rapidly attenuated as deformation propagates through the viscoelastic medium of the proxy organ. Consequently, different transmission modes can produce nearly identical kinematic observations at the target.

Consider the degenerate case of a stationary target, i.e., $\Delta \mathbf{p}_t \approx \mathbf{0}$. This observation is ambiguous; it may stem from a Decoupled Mode ($\tau_t = 0$), where the proxy is detached and transmits minimal force, or a Blocked Mode ($\tau_t = 2$), where the target is rigidly constrained by surrounding anatomy. 
Although the underlying mechanics differ drastically, passive observation alone is insufficient to distinguish between these causes. 
A naive regression model learning a direct mapping $\mathbf{p}_{t+1} = \hat{\Phi}(\mathbf{g}_t)$ would conflate these regimes, leading to misleading predictions and potentially unsafe control decisions.

We refer to this phenomenon as observational ambiguity, formally described as the non-injectivity of the mapping from the interaction history $\mathcal{H}_t = \{ \mathbf{g}_{0:t}, \mathbf{p}_{0:t}, \mathbf{u}_{0:t-1} \}$ to the latent transmission mode $\tau_t$.
Due to this ambiguity, the robot's estimate of the transmission mode remains highly uncertain.
We quantify this uncertainty using the Shannon entropy $H(\tau_t \mid \mathcal{H}_t)$, with ambiguity indicated by $H(\tau_t \mid \mathcal{H}_t) > H_{\mathrm{th}}$, where $H_{\mathrm{th}}=$ \SI{0.25}{nat} in all experiments.

\hl{To explicitly represent this uncertainty, we maintain the mode belief}
\begin{equation} \label{eq:posterior_belief}
b_t(\tau)
=
P(\tau_t \mid \mathcal{H}_t),
\end{equation}
\hl{while the continuous coupling uncertainty is represented separately by the} \ac{gp} \hl{posterior over the adhesion Jacobian.
The entropy of} $b_t(\tau)$ \hl{quantifies uncertainty in the transmission mode and is used in Section}~\ref{sec:active_intervent} \hl{to trigger and design probing actions.}

\subsection{Control Objective} \label{sec:control_objective}
The manipulation objective is to compute a finite-horizon control sequence $\mathbf{u}_{t:t+T-1}$ that drives the target organ toward a desired configuration $\mathbf{p}_{\rm d}\in\mathbb{R}^{n_\mathrm{p}}$ \hl{under the current grasp and transmission conditions.}
We formulate this problem within a \ac{mpc} framework.
Over a prediction horizon of length $T$, the cumulative cost is defined as
\[
J(\mathbf{u}_{t:t+T-1}) =
\sum_{k=0}^{T-1}
\left(
\bigl\| \hat{\mathbf{p}}_{t+k+1} - \mathbf{p}_{\rm d} \bigr\|^2_{\mathbf{Q}}
+
\bigl\| \mathbf{u}_{t+k} \bigr\|^2_{\mathbf{R}}
\right),
\]
where $\mathbf{Q} \in \mathbb{R}^{n_\mathrm{p} \times n_\mathrm{p}}$
and $\mathbf{R} \in \mathbb{R}^{m \times m}$ are positive-definite
weighting matrices penalizing \hl{the target tracking error and control
effort}, respectively. The weighted norm
$\| \mathbf{x} \|^2_{\mathbf{W}}$ is defined as
$\mathbf{x}^\top \mathbf{W} \mathbf{x}$.

\hl{The predicted target state} $\hat{\mathbf{p}}_{t+k+1}$ \hl{depends on the uncertain transmission mode and learned coupling dynamics.
The optimal control sequence is therefore obtained by minimizing the expected cost under the current mode belief:}
\begin{equation}
\label{eq:objective}
\mathbf{u}_{t:t+T-1}^{*}
=
\operatorname*{arg\,min}_{\mathbf{u}_{t:t+T-1}}
\;
\mathbb{E}_{\tau\sim b_t}
\left[
J(\mathbf{u}_{t:t+T-1})
\right].
\end{equation}
\hl{When the belief is concentrated on the Coupled Mode, the controller primarily uses the learned coupling model to reduce the target tracking error.
When the transmission mode remains uncertain, its belief attenuates the predicted deformation transmission.
Continuous coupling uncertainty is represented separately by the} \ac{gp} \hl{posterior, whose mean is used for real-time MPC prediction, as detailed in Section}~\ref{sec:dtc}.

\section{Structural Causal Modelling} \label{sec:scm}
Having formalized the state space and hybrid deformation transmission, we now construct an \ac{scm} governing the system dynamics.
Conventional kinematic models, such as a deformation Jacobian, primarily describe correlations between robot motion and tissue deformation.
In contrast, an \ac{scm} explicitly represents directed causal mechanisms and the role of latent variables in modulating them.
This structural perspective is critical for distinguishing different physical processes, namely adhesive coupling between organs and motion blockage caused by environmental constraints.

\begin{figure*}[!hbt]
    \centering
    \addtolength{\abovecaptionskip}{-10pt}
    \input{scm}
    \caption{\ac{scm} of indirect deformation transmission.
    The control action $\mathbf{u}_t$ drives proxy deformation $\Delta \mathbf{g}_t$, while the latent transmission mode $\tau_t$ gates the proxy-to-target transmission pathway.}
    \label{fig:diagram}
\end{figure*}

\subsection{The Causal Graph}

As illustrated in the causal diagram, Fig. \ref{fig:diagram}, the evolution of the system is driven by three distinct structural dependencies:
\subsubsection{Actuation Mechanism}
The proxy organ state is directly controlled by the robot, as in \eqref{eq:proxy_dynamics}.

\subsubsection{Continuous Coupling Potential}
A continuous function $\Psi(\mathbf{g}_t, \mathbf{g}_{t+1}; \theta)$ represents the potential deformation transmission between the proxy and target organs.
Parametrised by $\theta$, this function describes the target displacement that would occur under ideal mechanical coupling.
It captures the underlying physical interaction and forms the core continuous component of the transmission term $\mathcal{T}$ defined in \eqref{eq:target_dynamics}.

\subsubsection{The Latent Transmission Mode}
\hl{The effect of proxy motion on the target depends on the latent transmission mode} $\tau_t$.
\hl{We represent this mode-dependent transmission using an indicator function in the structural equation}
\begin{equation}\label{eq:transmission}
    \mathbf{p}_{t+1} := \mathbf{p}_t + \Psi(\mathbf{g}_t, \mathbf{g}_{t+1}; \theta) \cdot \mathbb{I}(\tau_t=1) + \mathbf{w}_t^\mathrm{p}.
\end{equation}
Here, $\mathbb{I}(\cdot)$ \hl{equals 1 in the Coupled Mode and 0 otherwise, activating or suppressing the transmission term.}

This formulation shows how the latent transmission mode determines whether proxy motion affects the target.
When $\tau_t = 1$, corresponding to the Coupled Mode,
\hl{the indicator activates the transmission term} $(\mathbb{I}=1)$, \hl{enabling the causal pathway from proxy motion to target deformation through the coupling function $\Psi$.}
\hl{When $\tau_t=0$, the transmission term is suppressed because the proxy and target are mechanically decoupled. When $\tau_t=2$, the same local model represents suppression of target motion along the currently constrained manipulation direction. 
Importantly, the Blocked Mode does not imply a global loss of proxy--target coupling; the system may return to the Coupled Mode when manipulation proceeds along an unconstrained direction.}
\eqref{eq:transmission} \hl{therefore characterizes the target-side deformation-transmission mechanism. 
The Decoupled and Blocked Modes are further distinguished by their different proxy responses to the applied action, as specified by the mode-conditioned observation model in Section}~\ref{sec:causal_inference}.

\subsection{Non-Parametric Modelling of Coupling Parameters}

The adhesion Jacobian $\mathbf{J}_{\mathrm{adh}}(\mathbf{g})$ in \eqref{eq:tau=1} captures complex, nonlinear, and patient-specific tissue interactions that are not available in closed form.
To model this coupling without imposing restrictive parametric assumptions, we adopt a data-driven formulation based on a \ac{gp} model.

Conditioned on the Coupled Mode, the local transmission dynamics are approximated by first-order linearization of \eqref{eq:tau=1},
\[
\Delta\mathbf{p}_t
\approx
\mathbf{J}_{\mathrm{adh}}(\mathbf{g}_t)\Delta\mathbf{g}_t
+
\boldsymbol{\epsilon}_t,
\qquad
\boldsymbol{\epsilon}_t
\sim
\mathcal{N}(\mathbf{0},\mathbf{R}_{\mathrm{noise}}),
\]
where
$\Delta \mathbf{p}_t=\mathbf{p}_{t+1}-\mathbf{p}_t$
and
$\Delta \mathbf{g}_t=\mathbf{g}_{t+1}-\mathbf{g}_t$.
A fixed sliding window of the most recent $N_\mathrm{J}=10$ Coupled-mode
observations is used to estimate the local adhesion Jacobian.
Specifically,
$\Delta\mathbf{P}_t=[\Delta\mathbf{p}_{t-N_\mathrm{J}+1},\ldots,\Delta\mathbf{p}_t]$
and
$\Delta\mathbf{G}_t=[\Delta\mathbf{g}_{t-N_\mathrm{J}+1},\ldots,\Delta\mathbf{g}_t]$,
from which
$
\hat{\mathbf{J}}_{\mathrm{adh},t}
=
\Delta\mathbf{P}_t\Delta\mathbf{G}_t^{\dagger}.
$
The sliding window maintains a local estimate as the tissue
configuration evolves.
For notational simplicity, the explicit parametrization $\theta$ of
the Jacobian is omitted, and all coupling parameters are represented by the state-dependent function
$\mathbf{J}_{\mathrm{adh}}(\mathbf{g})$.

\hl{Each element of the state-dependent adhesion Jacobian is modelled by an independent} \ac{gp},
\begin{equation}
\label{eq:jacobian_gp}
\begin{aligned}
J_{\mathrm{adh},ij}(\mathbf{g})
&\sim
\mathcal{GP}\!\left(
m_{ij}(\mathbf{g}),
k(\mathbf{g},\mathbf{g}')
\right),\\[-2pt]
&\qquad
i=1,\ldots,n_\mathrm{p},\quad
j=1,\ldots,n_\mathrm{g} .
\end{aligned}
\end{equation}
\hl{A zero prior mean is used before valid local Jacobian estimates are available.}
The kernel function $k(\mathbf{g},\mathbf{g}')$ is chosen as a squared exponential kernel to capture smooth local variations of the coupling with tissue configuration.
Each local estimate $\hat{\mathbf{J}}_{\mathrm{adh},k}$ is paired with the corresponding tissue configuration $\mathbf{g}_k$.
Let $\mathcal{D}_t$ denote the resulting \ac{gp} training pairs.
Only observations for which $b_k(\tau=1)>\beta_{\mathrm{GP}}$ holds for $N_{\mathrm{GP}}$ consecutive updates are used to construct these estimates, with $\beta_{\mathrm{GP}}=0.85$ and $N_{\mathrm{GP}}=5$ in all experiments.

Conditioned on $\mathcal{D}_t$, each \ac{gp} provides the posterior
\begin{equation}
\label{eq:gp_posterior}
J_{\mathrm{adh},ij}(\mathbf{g}_t)
\mid
\mathcal{D}_t
\sim
\mathcal{N}\!\left(
\mu_{ij,t}(\mathbf{g}_t),
\sigma_{ij,t}^{2}(\mathbf{g}_t)
\right).
\end{equation}
The posterior means form the current Jacobian estimate $\boldsymbol{\mu}_t(\mathbf{g}_t)$, while the posterior variances are propagated to the predictive covariance in \eqref{eq:L=1}.

\subsection{Mode-Conditioned Likelihoods} \label{sec:causal_inference}
While the \ac{gp} models the transmission dynamics under the Coupled Mode ($\tau_t = 1$), we also define the physical response patterns distinguishing the three modes. 
The \ac{scm} implies that each mode $\tau_t$ induces a different distribution over the differential observations
$\mathcal{Y}_t = [\Delta \mathbf{g}_t^\top,\Delta \mathbf{p}_t^\top]^\top$.
Accordingly, the mode-conditioned likelihood can be decomposed as 
\[
\mathcal{L}(\mathcal{Y}_t \mid \tau_t,\mathbf{u}_t) = p(\Delta\mathbf{p}_t \mid \Delta\mathbf{g}_t,\tau_t) \,p(\Delta\mathbf{g}_t \mid \mathbf{u}_t,\tau_t). 
\]
\hl{The first factor characterizes deformation transmission from proxy to target, whereas the second characterizes the action-induced proxy response. 
This decomposition is particularly important for distinguishing the Decoupled and Blocked Modes: both can produce negligible target displacement, while an external blocking constraint additionally alters the proxy response.} 
A mode-conditioned likelihood $\mathcal{L}(\mathcal{Y}_t \mid \tau_t,\mathbf{u}_t)$ is therefore defined for each candidate mode and used for Bayesian mode inference.

\hl{The proxy-response distributions are initialized from the first} $N_0$
\hl{probing responses generated by the active sensing procedure.
These observations are temporarily buffered and provide action--response pairs}
$(\mathbf{u}_k,\Delta\mathbf{g}_k)$
\hl{for the current tissue sample and grasping region.
In the considered manipulation setting, these initial responses are collected before an external blocking constraint is activated.
The nominal proxy response is represented by the Gaussian distribution}
\begin{equation*}
P_{\mathrm{n}}(\Delta\mathbf{g}_t \mid \mathbf{u}_t)
=
\mathcal{N}\!\left(
\Delta\mathbf{g}_t;
\boldsymbol{\mu}_{\mathrm{n}}(\mathbf{u}_t),
\boldsymbol{\Sigma}_{\mathrm{n}}
\right),
\end{equation*}
where
$\boldsymbol{\mu}_{\mathrm{n}}(\mathbf{u}_t)
=
\mathbf{A}_{\mathrm{n}}\mathbf{u}_t$,
with $\mathbf{A}_{\mathrm{n}}$ \hl{estimated by least squares from the buffered action--response pairs, and}
$\boldsymbol{\Sigma}_{\mathrm{n}}$
is the empirical covariance of the residuals
$\Delta\mathbf{g}_k-\mathbf{A}_{\mathrm{n}}\mathbf{u}_k$.
\hl{This local initialization accounts for variations between tissue samples and grasping regions without requiring patient-independent material parameters.
The resulting distribution is kept fixed during the subsequent manipulation trial and is reinitialized when the tissue sample or grasping region changes.}

In the Decoupled Mode ($\tau_t = 0$), the target organ is mechanically detached from the proxy.
Consequently, the observed target displacement $\Delta \mathbf{p}_t$ is assumed to be independent of the proxy motion and dominated by measurement noise.
The corresponding likelihood is given by
\begin{equation} \label{eq:L=0}
    \mathcal{L}(\mathcal{Y}_t \mid \tau_t = 0, \mathbf{u}_t) = \mathcal{N}(\Delta \mathbf{p}_t; \mathbf{0}, \mathbf{R}_{\mathrm{noise}}) P_{\mathrm{n}}(\Delta \mathbf{g}_t \mid \mathbf{u}_t),
\end{equation}
where $\mathbf{R}_{\mathrm{noise}}$ denotes the covariance of the effective target-response noise.

When the Coupled Mode is active ($\tau_t = 1$), the target displacement is induced by the proxy deformation through the learned coupling dynamics.
The observation likelihood is derived from the posterior predictive distribution of the \ac{gp} introduced in \eqref{eq:gp_posterior}.
\hl{By marginalizing over the posterior uncertainty of the adhesion Jacobian, the likelihood becomes}
\begin{equation} \label{eq:L=1}
\mathcal{L}(\mathcal{Y}_t \!\mid \!\tau_t = 1, \mathbf{u}_t)
=
\mathcal{N}\!\left(
\Delta \mathbf{p}_t;
\boldsymbol{\mu}_t(\mathbf{g}_t)\Delta \mathbf{g}_t,
\boldsymbol{\Sigma}_{\mathrm{p},t}
\right) \!
P_{\mathrm{n}}(\Delta \mathbf{g}_t \!\mid\! \mathbf{u}_t),
\end{equation}
where $\boldsymbol{\mu}_t(\mathbf{g}_t)$ is assembled from the element-wise posterior means in \eqref{eq:gp_posterior}.
The predictive covariance induced by the \ac{gp} uncertainty is defined as
$
\boldsymbol{\Sigma}^{\mathrm{GP}}_{\mathrm{p},t}
=
\operatorname{Cov}\!\left[
\mathbf{J}_{\mathrm{adh}}(\mathbf{g}_t)
\Delta \mathbf{g}_t
\mid
\mathcal{D}_t
\right] $,
\hl{and the total predictive covariance is}
$
\boldsymbol{\Sigma}_{\mathrm{p},t}
=
\boldsymbol{\Sigma}^{\mathrm{GP}}_{\mathrm{p},t}
+
\mathbf{R}_{\mathrm{noise}}$.
Under the independent element-wise \ac{gp} formulation in \eqref{eq:jacobian_gp},
$\boldsymbol{\Sigma}^{\mathrm{GP}}_{\mathrm{p},t}$ \hl{is obtained from the corresponding posterior variances and represents the uncertainty propagated to the predicted target displacement.}
\hl{The same nominal proxy-response distribution is used in}
\eqref{eq:L=0} and \eqref{eq:L=1},
\hl{since neither the Decoupled nor the Coupled Mode activates the external blocking constraint; these modes are primarily distinguished by the target response.}

The Blocked Mode ($\tau_t = 2$) \hl{corresponds to the case in which the target response is restricted by an external constraint.
Based on target motion alone, this mode may resemble the Decoupled Mode because} $\Delta\mathbf{p}_t$ \hl{can remain small.
However, the external constraint also reduces the proxy response relative to its nominal value.
Rather than learning a separate blocked-response model,} $P_{\mathrm{stiff}}$ \hl{is constructed directly from the locally initialized nominal distribution: }
\begin{equation*}
P_{\mathrm{stiff}}(\Delta\mathbf{g}_t \mid \mathbf{u}_t)
=
\mathcal{N}\!\left(
\Delta\mathbf{g}_t;
\lambda_{\mathrm{b}}
\boldsymbol{\mu}_{\mathrm{n}}(\mathbf{u}_t),
\boldsymbol{\Sigma}_{\mathrm{n}}
\right),
\qquad
0 < \lambda_{\mathrm{b}} < 1,
\end{equation*}
where $\lambda_{\mathrm{b}}$ \hl{is a fixed response-attenuation factor.
A smaller value of $\lambda_{\mathrm{b}}$ represents a stronger reduction in proxy response under the external constraint.
Using the same covariance as the nominal distribution avoids introducing an additional learned model and isolates the effect of the attenuated mean response.
The Blocked-mode likelihood is therefore}
\begin{equation}
\label{eq:L=2}
\mathcal{L}(\mathcal{Y}_t \! \mid \! \tau_t = 2,\mathbf{u}_t)\!
=
\!\mathcal{N}\!\left(
\Delta\mathbf{p}_t;
\mathbf{0},
\mathbf{R}_{\mathrm{noise}}
\right)\!
P_{\mathrm{stiff}}(\Delta\mathbf{g}_t \! \mid \!\mathbf{u}_t).
\end{equation}

\hl{After the proxy-response distributions have been initialized, the buffered observations are evaluated using} \eqref{eq:L=0}--\eqref{eq:L=2} \hl{to initialize the mode belief.
Subsequent responses generated by normal control actions and, when needed, additional probing actions are then processed recursively by the Bayesian mode update.
The parameters of} $P_{\mathrm{n}}$ and $P_{\mathrm{stiff}}$ \hl{remain fixed within each trial, preventing responses generated after blockage from being incorporated into the nominal proxy model.}

\subsection{Passive Observational Indistinguishability}
\hl{To formalize why normal task execution may be insufficient for mode
inference, we analyse the distinguishability of the latent transmission
mode under a passive task-control policy.}

\begin{proposition}[Observational Indistinguishability]
\label{prop:indistinguishability}
Consider the stationary-target regime
$\Delta\mathbf{p}_t\approx\mathbf{0}$.
Under a passive control policy
$\mathbf{u}_t=\pi_{\mathrm{pass}}(\mathcal{H}_t)$
with $\mathbf{u}_t\rightarrow\mathbf{0}$ as the task settles,
the Decoupled and Blocked observation likelihoods become
asymptotically indistinguishable in terms of the
Kullback--Leibler (KL) divergence:
\[
\lim_{t\rightarrow\infty}
D_{\mathrm{KL}}
\!\left(
P(\mathcal{Y}_t\mid\tau=0,\mathbf{u}_t)
\,\|\, 
P(\mathcal{Y}_t\mid\tau=2,\mathbf{u}_t)
\right)
=0.
\]
\end{proposition}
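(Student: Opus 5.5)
The plan is to compute the divergence in closed form from the likelihoods \eqref{eq:L=0} and \eqref{eq:L=2}, and then pass to the limit using $\mathbf{u}_t\rightarrow\mathbf{0}$.

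\textbf{Step 1: factorization.} Both likelihoods factor as a target term times a proxy term. The target factor is the same in both modes, namely $\mathcal{N}(\Delta\mathbf{p}_t;\mathbf{0},\mathbf{R}_{\mathrm{noise}})$, and it does not depend on $\Delta\mathbf{g}_t$. The KL divergence of product measures is additive, so the target factors contribute exactly zero. This reduces the claim to
\[
D_{\mathrm{KL}}\!\left(P_{\mathrm{n}}(\cdot\mid\mathbf{u}_t)\,\|\,P_{\mathrm{stiff}}(\cdot\mid\mathbf{u}_t)\right).
\]
This step is where the stationary-target regime $\Delta\mathbf{p}_t\approx\mathbf{0}$ enters. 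It justifies that the Coupled alternative is not needed and that the target channel carries no discriminating information.

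\textbf{Step 2: closed-form Gaussian KL.} The two remaining Gaussians share the covariance $\boldsymbol{\Sigma}_{\mathrm{n}}$. Their means are $\mathbf{A}_{\mathrm{n}}\mathbf{u}_t$ and $\lambda_{\mathrm{b}}\mathbf{A}_{\mathrm{n}}\mathbf{u}_t$. For equal covariances the trace and log-determinant terms cancel, leaving
\[
D_{\mathrm{KL}}=\tfrac12(1-\lambda_{\mathrm{b}})^2\,\mathbf{u}_t^\top\mathbf{A}_{\mathrm{n}}^\top\boldsymbol{\Sigma}_{\mathrm{n}}^{-1}\mathbf{A}_{\mathrm{n}}\mathbf{u}_t .
\]
This requires $\boldsymbol{\Sigma}_{\mathrm{n}}$ to be positive definite. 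That holds if the empirical residual covariance is nondegenerate; otherwise one adds the usual regularization, and I would state this explicitly as an assumption.

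\textbf{Step 3: bound and limit.} The quadratic form is bounded by
\[
D_{\mathrm{KL}}\le \tfrac12(1-\lambda_{\mathrm{b}})^2\,\|\mathbf{A}_{\mathrm{n}}\|^2\,\lambda_{\min}(\boldsymbol{\Sigma}_{\mathrm{n}})^{-1}\,\|\mathbf{u}_t\|^2 .
\]
The constants $\mathbf{A}_{\mathrm{n}}$, $\boldsymbol{\Sigma}_{\mathrm{n}}$ and $\lambda_{\mathrm{b}}$ are fixed within a trial, as stated after \eqref{eq:L=2}. Hence $\mathbf{u}_t\rightarrow\mathbf{0}$ gives $D_{\mathrm{KL}}\rightarrow 0$, and the rate is quadratic in $\|\mathbf{u}_t\|$.

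As a corollary, the Bayes factor between modes $0$ and $2$ tends to one in expectation. Under passive control the belief therefore cannot separate the two modes, which motivates the probing intervention that follows.

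\textbf{Main obstacle.} The difficulty lies in the modelling assumptions rather than the algebra. One must justify treating the proxy likelihoods as fixed, with parameters frozen per trial, so that the only source of time dependence is $\mathbf{u}_t$. One must also interpret "$\Delta\mathbf{p}_t\approx\mathbf{0}$" as meaning that the Coupled mode is excluded or irrelevant. I would make both assumptions explicit at the start.
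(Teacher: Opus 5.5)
Your proof is correct and follows the paper's argument: you drop the common target factor, apply the equal-covariance Gaussian KL formula $\tfrac12(1-\lambda_{\mathrm{b}})^2\|\boldsymbol{\mu}_{\mathrm{n}}(\mathbf{u}_t)\|^2_{\boldsymbol{\Sigma}_{\mathrm{n}}^{-1}}$, and let $\mathbf{u}_t\to\mathbf{0}$; your use of the linear mean $\mathbf{A}_{\mathrm{n}}\mathbf{u}_t$ gives directly the continuity and $\boldsymbol{\mu}_{\mathrm{n}}(\mathbf{0})=\mathbf{0}$ that the paper states as assumptions. One caution applies only to your corollary: a vanishing per-step KL does not by itself stop the belief from separating the modes, because the accumulated expected log Bayes factor $\sum_t \tfrac12(1-\lambda_{\mathrm{b}})^2\|\mathbf{A}_{\mathrm{n}}\mathbf{u}_t\|^2_{\boldsymbol{\Sigma}_{\mathrm{n}}^{-1}}$ still diverges if $\|\mathbf{u}_t\|$ decays slowly, e.g.\ like $t^{-1/2}$.
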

The proof is provided in Appendix~\ref{app:proof_prop1}.

\textit{Remark}: 
Proposition~\ref{prop:indistinguishability} \hl{shows that passive observations may become insufficient to distinguish the Decoupled and Blocked modes as task actions vanish, motivating additional probing to elicit distinguishable responses.}

\section{Transmission-Mode Inference with Active Sensing} \label{sec:active_intervent}
\hl{The mode-conditioned observation models derived from the structural formulation update the transmission-mode belief using tissue responses generated by normal control actions} $\mathbf{u}_t$ \hl{and incorporated into the interaction history} $\mathcal{H}_t$.
\hl{Because these task-directed actions are selected primarily to reduce task error rather than distinguish the transmission modes, their responses may remain ambiguous, particularly when} $\Delta\mathbf{p}_t \approx \mathbf{0}$, \hl{for which the Decoupled} $(\tau_t=0)$ \hl{and Blocked} $(\tau_t=2)$ \hl{modes both predict negligible target motion.}

\hl{When this ambiguity persists, the robot applies an additional probing action} $\delta\mathbf{u}$ \hl{selected to improve mode distinguishability.
Following standard} \ac{scm} intervention notation \cite{pearl2009causality}, \hl{this action is represented as} $\operatorname{do}(\delta\mathbf{u})$, \hl{indicating that it is imposed independently of the nominal task-control policy; the resulting response is then used to update the mode belief.}

\begin{figure}
    \centering
    \addtolength{\abovecaptionskip}{-20pt}
    \input{ambiguity}
    \caption{Conceptual illustration of Passive Observation and Active Sensing.}
    \label{fig:ambiguity}
\end{figure}

\subsection{Bayesian Belief Update}
The inference estimates the mode belief
\( b_t(\tau) = P(\tau_t \mid \mathcal{H}_t) \)
over the three transmission modes.
\hl{The transmission mode is assumed locally persistent over consecutive control cycles.}

At each time step $t$, after applying a control input $\mathbf{u}_t$ and observing the outcome $\mathcal{Y}_t$, the robot updates the mode belief recursively using Bayes' rule,
\begin{equation} \label{eq:bayes}
b_{t+1}(k)
=
\eta\,
\mathcal{L}(\mathcal{Y}_t \mid \tau_t = k, \mathbf{u}_t)\,
b_t(k),
\end{equation}
where
\(
\eta =
\left(
\sum_{j=0}^{2}
\mathcal{L}(\mathcal{Y}_t \mid \tau_t = j, \mathbf{u}_t)
b_t(j)
\right)^{-1}
\)
is a normalization constant.
This recursion adopts a local-persistence approximation without an explicit mode-transition model. When the interaction condition changes, subsequent mode-conditioned observations progressively shift the belief toward the newly supported mode.
The likelihood terms
$\mathcal{L}(\mathcal{Y}_t \mid \tau_t, \mathbf{u}_t)$
are computed using the structural indicators defined in
\eqref{eq:L=0}--\eqref{eq:L=2}.

\subsection{Causal Interpretation of Probing Actions}
During task execution, the control input $\mathbf{u}_t$ is selected primarily to minimize task error.
The resulting predictive distribution marginalizes over the unknown transmission mode:
\[
P(\mathcal{Y}_t \mid \mathcal{H}_t,\mathbf{u}_t)
=
\sum_{k=0}^{2}
\mathcal{L}(\mathcal{Y}_t \mid \tau_t = k, \mathbf{u}_t)\,
b_t(k).
\]
\hl{Observational ambiguity occurs} when the component likelihoods overlap, particularly when $\mathcal{L}(\mathcal{Y}_t \mid \tau_t = 0,\mathbf{u}_t)\approx \mathcal{L}(\mathcal{Y}_t \mid \tau_t = 2,\mathbf{u}_t)$, which \hl{limits the mode-belief update}.

\hl{To distinguish these modes, the robot applies an additional probing
action} $\delta\mathbf{u}$, represented in the \ac{scm} as the intervention $\operatorname{do}(\delta\mathbf{u})$. 
\hl{The corresponding mode-conditioned likelihood is}
\[
\mathcal{L}\!\left(
\mathcal{Y}_t
\mid
\tau_t=k,
\operatorname{do}(\delta\mathbf{u})
\right).
\]
\hl{Marginalizing over the current mode belief gives the predictive distribution under the probing action:}
\begin{equation} \label{eq:probing_predictive_distribution}
P\left(
\mathcal{Y}_t
\! \mid \!
\mathcal{H}_t,
\operatorname{do}(\delta\mathbf{u})
\right) \!
= \!
\sum_{k=0}^{2}
\mathcal{L}\!\left(
\mathcal{Y}_t
\! \mid \!
\tau_t=k,
\operatorname{do}(\delta\mathbf{u})
\right)
b_t(k).
\end{equation}
\hl{The intervention notation indicates that the probing action is externally imposed rather than selected by the task controller.}

\hl{For a probing action represented by} $\operatorname{do}(\delta \mathbf{u})$, \hl{we define the expected response under transmission mode} $k$ as
\[
\overline{\mathcal{Y}}_t^{(k)}(\delta \mathbf{u}) := \mathbb{E}\!\left[
\mathcal{Y}_t
\mid
\tau_t=k,\,
\operatorname{do}(\delta \mathbf{u})
\right].
\]
\hl{This quantity is the mean of the corresponding mode-conditioned predictive distribution. 
The probing action has a small magnitude and is approximately orthogonal to the current task direction, increasing separation between mode-conditioned responses while limiting its effect on task progress.}

\hl{In the Decoupled Mode} $(\tau_t=0)$, 
\hl{the proxy exhibits its nominal response to the probing action, while the target remains approximately stationary:}
\[
\overline{\mathcal{Y}}_t^{(0)}(\delta \mathbf{u})
\approx
\begin{bmatrix}
\boldsymbol{\mu}_{\mathrm{n}}(\delta \mathbf{u}) \\
\mathbf{0}
\end{bmatrix}.
\]

\hl{In the Blocked Mode} $(\tau_t=2)$, 
\hl{the environmental constraint attenuates the proxy response, while the target displacement remains negligible:}
\[
\overline{\mathcal{Y}}_t^{(2)}(\delta \mathbf{u})
\approx
\begin{bmatrix}
\lambda_{\mathrm{b}}
\boldsymbol{\mu}_{\mathrm{n}}(\delta \mathbf{u}) \\
\mathbf{0}
\end{bmatrix},
\qquad
0<\lambda_{\mathrm{b}}<1.
\]

\hl{The expected response in the Coupled Mode is obtained from the GP-based predictive distribution. Although target responses in the Decoupled and Blocked modes are both negligible, their proxy responses differ. The mode-conditioned likelihoods in}
Section~\ref{sec:causal_inference} use this difference to update the mode belief and reduce observational ambiguity.

\subsection{Information-Guided Probing}
Building on the interventional formulation above, the robot must actively select probing actions that maximize the information gained about the latent transmission mode.
We formulate this active sensing problem as an optimal experimental design task: selecting a probing action $\delta \mathbf{u}^*$ that maximizes the expected information gain regarding the latent transmission mode $\tau_t$.

We quantify the \ac{eig} using the mutual information between $\tau_t$ and the observation $\mathcal{Y}_t$ \hl{under a candidate probing action, represented in the} \ac{scm} as
$\operatorname{do}(\delta \mathbf{u})$:
\begin{equation}
\label{eq:eig_action}
\delta \mathbf{u}^{*}
=
\argmax_{\delta \mathbf{u} \in \mathcal{U}_{\mathrm{probe}}}
I(\tau_t; \mathcal{Y}_t \mid \mathcal{H}_t, \operatorname{do}(\delta \mathbf{u})),
\end{equation}
where
\(
\mathcal{U}_{\mathrm{probe}}
=
\left\{
\delta \mathbf{u}
\mid
\| \delta \mathbf{u} \|_2
\le
\epsilon_{\mathrm{probe}}
\right\}
\)
\hl{denotes the magnitude-bounded probing-action set, and} $\epsilon_{\mathrm{probe}}=$\SI{5}{\milli\meter}
\hl{is the maximum probing magnitude. 
Candidate directions are sampled approximately orthogonal to the current task direction.}

By definition, mutual information corresponds to the expected reduction in belief entropy,
\[
I(\tau_t; \mathcal{Y}_t \mid \mathcal{H}_t, \operatorname{do}(\delta \mathbf{u}))
=
H(\tau_t \mid \mathcal{H}_t)
-
\mathbb{E}_{\mathcal{Y}_t}
\left[
H(\tau_t \mid \mathcal{Y}_t, \delta \mathbf{u})
\right].
\]
Since the current belief entropy $H(\tau_t \mid \mathcal{H}_t)$ is independent of the probing action, maximizing the expected information gain is equivalent to minimizing the expected posterior entropy.

Directly evaluating the posterior entropy is computationally expensive because it requires integration over the continuous observation space $\mathcal{Y}_t$.
For real-time optimization, we exploit mutual-information symmetry and rewrite the objective using the differential entropy of the predictive distribution in \eqref{eq:probing_predictive_distribution}:
\begin{multline*}
I\left(
\tau_t;\mathcal{Y}_t
\mid
\mathcal{H}_t,\operatorname{do}(\delta\mathbf{u})
\right)
\\[-2pt]
= \!
H\!\left(
\mathcal{Y}_t
\! \mid \!
\mathcal{H}_t,\operatorname{do}(\delta\mathbf{u})
\right)
- \!
\sum_{k=0}^{2}
b_t(k)
H\!\left(
\mathcal{Y}_t
\! \mid \!
\tau_t \!= \!k,\mathcal{H}_t,\operatorname{do}(\delta\mathbf{u})
\right).
\end{multline*}
The second term depends primarily on sensor noise and model uncertainty and varies weakly with the probing action.
As a result, maximizing expected information gain is well approximated by maximizing the entropy of the marginal predictive distribution $H(\mathcal{Y}_t \mid \delta \mathbf{u})$.

\hl{This reformulation has a direct physical interpretation.
The marginal predictive distribution is a Gaussian mixture whose components correspond to the three transmission modes, with means given by the mode-conditioned expected responses}
$\overline{\mathcal{Y}}_t^{(k)}(\delta\mathbf{u})$.
\hl{For real-time evaluation, we approximate this mixture by a moment-matched Gaussian.
Its covariance is}
\[
\bar{\boldsymbol{\Sigma}}
=
\sum_{k=0}^{2} b_t(k)
\left[
\boldsymbol{\Sigma}_k
+
(\boldsymbol{\mu}_k-\bar{\boldsymbol{\mu}})
(\boldsymbol{\mu}_k-\bar{\boldsymbol{\mu}})^\top
\right],
\]
where $\bar{\boldsymbol{\mu}}=\sum_{k=0}^{2}b_t(k)\boldsymbol{\mu}_k$.
\hl{The marginal predictive entropy is then evaluated in closed form from} $\bar{\boldsymbol{\Sigma}}$.
\hl{Greater separation between the mode-conditioned responses generally increases the matched covariance, favouring probing actions that produce more distinguishable responses.}

The probing action space $\mathcal{U}_{\mathrm{probe}}$ is magnitude-bounded, and the resulting optimization problem is non-convex.
We therefore adopt a sample-based strategy.
At each probing step, $N_\mathrm{c}=12$ candidate directions are sampled according to the above criterion and scaled to the probing magnitude $\epsilon_\mathrm{probe}$.
The marginal predictive entropy is evaluated for each candidate using the mode-conditioned models in \eqref{eq:L=0}--\eqref{eq:L=2}, and the most informative probing action is selected,
\begin{equation} \label{eq:sample_based_approx}
\delta \mathbf{u}^*
\approx
\argmax_{i=1,\dots, N_\mathrm{c}}
H(\mathcal{Y}_t \mid \delta \mathbf{u}^{(i)}).
\end{equation}
This sample-based strategy selects informative probing directions without requiring gradient-based optimization.

\section{\acl{dtc}}\label{sec:dtc}
\hl{After active sensing updates the mode belief, the controller combines this belief with the learned} \ac{gp} \hl{coupling model to drive the target toward the desired configuration} $\mathbf{p}_{\mathrm d}$.
Conventional position or impedance controllers are inadequate because they assume persistent target controllability, which fails in the Decoupled and Blocked Modes.
We therefore adopt a \hl{belief-aware} \ac{mpc} that incorporates the inferred transmission mode into state prediction and optimization.

\subsection{Belief-Based State Prediction}

The primary challenge in solving \eqref{eq:objective} is predicting the future target state $\hat{\mathbf{p}}_{t+k}$ under uncertain transmission mode and coupling dynamics.
\hl{For real-time MPC, we approximate the expected-cost formulation in} \eqref{eq:objective} \hl{using the first moment of the uncertain transmission model. 
The current mode belief represents transmission-mode uncertainty, while the local GP posterior mean approximates coupling uncertainty:}
\begin{equation}
\label{eq:linear_predictor}
\hat{\mathbf{p}}_{t+k+1}
\approx
\hat{\mathbf{p}}_{t+k}
+
b_t(\tau=1)
\boldsymbol{\mu}_t(\mathbf{g}_t)
\Delta\mathbf{g}_{t+k},
\end{equation}
where
$\Delta\mathbf{g}_{t+k}
=
\hat{\mathbf{g}}_{t+k+1}-\hat{\mathbf{g}}_{t+k}$.
The local \ac{gp} mean $\boldsymbol{\mu}_t(\mathbf{g}_t)$ is fixed over each prediction horizon and updated in the next cycle.
The \ac{gp} predictive covariance is not propagated through the MPC horizon, but enters the Coupled-mode likelihood in \eqref{eq:L=1}, thereby affecting subsequent mode-belief updates.

When the belief concentrates on the Coupled Mode,
$b_t(\tau=1)\rightarrow1$,
the effective transmission model approaches the learned local \ac{gp} mean.
As the Coupled-mode belief decreases, the predicted transmission is attenuated, reducing the estimated control authority over the target.

\subsection{Quadratic Programming Formulation and Solving}
\acreset{qp}

By substituting the belief-weighted local linear prediction model in \eqref{eq:linear_predictor} into the cost function defined in \eqref{eq:objective}, the finite-horizon control problem can be written in a standard \ac{qp} form at each time step.
\hl{The robot-to-proxy deformation is locally approximated as}
$\Delta\mathbf{g}_t \approx \widehat{\mathbf{J}}_{g,t}\mathbf{u}_t$,
where $\widehat{\mathbf{J}}_{g,t}$ \hl{is the proxy deformation Jacobian
estimated using the shape-servoing method in} \cite{navarro2013model}.
\hl{Within each MPC optimization,}
$b_t(\tau=1)$, $\boldsymbol{\mu}_t(\mathbf{g}_t)$, and
$\widehat{\mathbf{J}}_{g,t}$ \hl{are held fixed over the prediction horizon
and updated at the next control cycle.}
Specifically, by stacking the predicted target dynamics over the
prediction horizon $T$, the optimization problem becomes
\begin{equation}\label{eq:qp}
\mathbf{u}^*_{t:t+T-1}
=
\argmin_{\mathbf{u}}
\frac{1}{2}\mathbf{u}^{\top}\mathbf{H}\mathbf{u}
+
\mathbf{f}^{\top}\mathbf{u},
\end{equation}
where
$\mathbf{u}
=
[\mathbf{u}_t^\top,\dots,\mathbf{u}_{t+T-1}^\top]^\top$
denotes the stacked control sequence.
The Hessian matrix $\mathbf{H}$ and gradient vector $\mathbf{f}$ are
obtained by expanding the quadratic cost over the prediction horizon as
\[
\mathbf{H}
=
2\left(
\mathbf{\Gamma}^\top
\bar{\mathbf{Q}}
\mathbf{\Gamma}
+
\bar{\mathbf{R}}
\right),
\qquad
\mathbf{f}
=
2\mathbf{\Gamma}^\top
\bar{\mathbf{Q}}
\left(
\mathbf{\Phi}\mathbf{p}_t
-
\mathbf{P}_{\rm d}
\right),
\]
where $\bar{\mathbf{Q}}$ and $\bar{\mathbf{R}}$ are the block-diagonal
extensions of the stage cost matrices $\mathbf{Q}$ and $\mathbf{R}$
over the horizon.
The matrix $\mathbf{\Gamma}$ is the convolution matrix constructed
from the \hl{fixed belief-weighted local control Jacobian}
$\mathbf{M}_t=b_t(\tau=1)\boldsymbol{\mu}_t(\mathbf{g}_t)\widehat{\mathbf{J}}_{g,t}$,
while $\mathbf{\Phi}$ propagates the current target state $\mathbf{p}_t$ through the prediction horizon under zero input.
The vector $\mathbf{P}_{\rm d}$ stacks the desired target configuration $\mathbf{p}_{\rm d}$ across the horizon.

The resulting \ac{qp} \hl{implements the belief-aware deformation controller used in this work; it does not impose explicit force, tissue-strain, or collision-avoidance constraints and is solved using the standard numerical solver OSQP.}

\subsection{Algorithm Summary and Implementation}
The complete control architecture integrates mode inference with the belief-aware deformation controller.
Mode inference and probing operate at a higher level, whereas the \ac{mpc} computes task-directed actions each control cycle.
The first optimized action, $\mathbf{u}_t^{*}$, is implemented as an incremental Cartesian position command executed by a low-level position servo.

\hl{For task monitoring, the target shape error is quantified by the mean Euclidean distance between the current and desired target feature positions:}
\[e_t =
\frac{1}{N_f}
\sum_{i=1}^{N_f}
\left\|
\mathbf{p}_{i,t}-\mathbf{p}_{i,d}
\right\|_2 ,
\]
where $N_f$ is the number of target features and $\mathbf{p}_{i,t}$ and $\mathbf{p}_{i,d}$ denote the current and desired positions of the $i$-th target feature, respectively.

\hl{A reduction in target shape error alone is insufficient to determine task completion, as perception noise, online model updates, and viscoelastic tissue relaxation may induce residual corrections.
We therefore distinguish target-shape convergence from robot-motion settlement using the joint condition}
\begin{equation}
\label{eq:settlement_condition}
e_t \leq \epsilon_{\rm p},
\qquad
\left\|\mathbf{u}_t^{*}\right\|_2
\leq \epsilon_{\rm u}.
\end{equation}
Here, $\epsilon_{\rm p}$ and $\epsilon_{\rm u}$ \hl{denote the target-shape and Cartesian-command settlement thresholds, respectively.
The manipulation is considered settled only when}
\eqref{eq:settlement_condition} holds for $N_{\rm s}$ consecutive control cycles, where $N_{\rm s}$ defines the settlement window.
In all experiments, we used $\epsilon_{\rm p}=5$\si{\milli\meter},
$\epsilon_{\rm u}=0.6$\si{\milli\meter}, and $N_{\rm s}=20$.
\hl{The controller then holds the current end-effector pose.}

The overall procedure is summarized in Algorithm~1.

\begin{algorithm}[!htb]
\caption{\hl{Causal-Aware Indirect Manipulation}}
\begin{algorithmic}[1]

\State \textbf{Initialize:}
Mode belief $b_0(\tau)=\left[\frac{1}{3},\frac{1}{3},\frac{1}{3}\right]$, 
\ac{gp} model $\mathcal{GP}_0$, horizon $T$, response buffer 
$\mathcal{B}\leftarrow\emptyset$, nominal-model flag 
$q_{\mathrm n}\leftarrow0$, GP confidence counter 
$n_{\mathrm{GP}}\leftarrow0$, and settlement counter 
$n_{\mathrm s}\leftarrow0$.

\State \textbf{Loop} for $t=0,1,2,\dots$

\State \quad \textbf{If } $q_{\mathrm n}=0$\textbf{:}

\State \qquad Select a bounded probing action
$\mathbf{u}_t\in\mathcal{U}_{\mathrm{probe}}$.

\State \quad \textbf{Otherwise:}

\State \qquad \textbf{If }
$H(b_t)>H_{\mathrm{th}}$\textbf{:}

\State \qquad\quad
$\delta\mathbf{u}^{*}
\leftarrow
\arg\max_{\delta\mathbf{u}\in\mathcal{U}_{\mathrm{probe}}}
I(\tau;\mathcal{Y}\mid\delta\mathbf{u})$
using \eqref{eq:eig_action} and
\eqref{eq:sample_based_approx},
and set $\mathbf{u}_t\leftarrow\delta\mathbf{u}^{*}$.

\State \qquad \textbf{Else:}

\State \qquad\quad Obtain
$\mathbf{u}^{*}_{t:t+T-1}$ by solving \eqref{eq:qp}.

\State \qquad\quad
$n_{\mathrm s}\leftarrow n_{\mathrm s}+1$
if
$e_t\leq\epsilon_{\mathrm p}$
and
$\|\mathbf{u}^{*}_t\|\leq\epsilon_{\mathrm u}$;
otherwise
$n_{\mathrm s}\leftarrow0$.

\State \qquad\quad \textbf{If }
$n_{\mathrm s}\geq N_{\mathrm s}$\textbf{:}
hold current robot pose and \textbf{break}.

\State \qquad\quad
Set $\mathbf{u}_t\leftarrow\mathbf{u}^{*}_t$.

\State \quad Apply $\mathbf{u}_t$ and observe
$\mathcal{Y}_t=
[\Delta\mathbf{g}_t^\top,
\Delta\mathbf{p}_t^\top]^\top$.

\State \quad \textbf{If } $q_{\mathrm n}=0$\textbf{:}

\State \qquad Store
$(\mathbf{u}_t,\mathcal{Y}_t)$ in $\mathcal{B}$.

\State \qquad \textbf{If } $|\mathcal{B}|<N_0$\textbf{:}
\textbf{continue}.

\State \qquad Initialize $P_{\mathrm n}$ from $\mathcal{B}$,
construct $P_{\mathrm{stiff}}$ using $\lambda_{\mathrm b}$,
and replay the buffered likelihoods to initialize $b_{t+1}(\tau)$.

\State \qquad Initialize $\mathcal{GP}_{t+1}$ using buffered samples admitted by Coupled-mode confidence gate
$(\beta_{\mathrm{GP}},N_{\mathrm{GP}})$,
and $q_{\mathrm n}\leftarrow1$.

\State \quad \textbf{Otherwise:}

\State \qquad Update $b_{t+1}(\tau)$ using Bayes' rule and
\eqref{eq:L=0}--\eqref{eq:L=2}.

\State \qquad
$n_{\mathrm{GP}}\leftarrow n_{\mathrm{GP}}+1$
if $b_{t+1}(\tau=1)>\beta_{\mathrm{GP}}$;
otherwise $n_{\mathrm{GP}}\leftarrow0$.

\State \qquad \textbf{If }
$n_{\mathrm{GP}}\geq N_{\mathrm{GP}}$\textbf{:}
update \ac{gp} coupling model via \eqref{eq:gp_posterior}.

\end{algorithmic}
\end{algorithm}

\section{Simulation Experiments}

All simulations were conducted using the SOFA framework \cite{faure:hal-00681539}. 
The virtual environment includes deformable models of the stomach, pancreas, liver, and surrounding tissues, together with standard surgical instruments, to mimic pancreatic exposure through stomach manipulation. 
Additional rigid blocks are introduced as external obstacles to generate collision constraints.
Adhesion between the stomach and pancreas is synthetically generated using predefined adhesion nodes and a sticking force activated upon contact. 
Because the simulator provides full access to ground-truth states, organ positions, adhesion configurations, and collisions are fully observable.

\begin{figure*}[!hbt]
    \centering
    \addtolength{\abovecaptionskip}{-10pt}
    \input{simulation}
    \caption{\hl{Representative simulation results for the three transmission modes: (a) Decoupled, (b) Coupled, and (c) Blocked.
    The image sequences show stomach-proxy manipulation and the pancreatic-target response.
    The stacked area plots show the mode belief} $b_t(\tau)$, \hl{while the red curve shows the mean target-shape error.
    Target deformation and error reduction are evident in the Coupled Mode, whereas target motion remains limited in the Decoupled and Blocked modes.}}
    \label{fig:simulation}
\end{figure*}

\hl{For all simulation trials, the first} $N_0=10$ \hl{probing responses were buffered to initialize the local nominal proxy-response distribution} $P_{\mathrm{n}}$.
\hl{For the Blocked-mode trials, the initial response window was completed before the target reached the synthesized external constraint.
The blocked-response distribution} $P_{\mathrm{stiff}}$ \hl{was then constructed from} $P_{\mathrm{n}}$ using a response-attenuation factor
$\lambda_{\mathrm{b}}=0.4$.
\hl{Both distributions were kept fixed for the remainder of each trial.
The buffered observations were subsequently processed by the Bayesian filter to initialize the mode belief; thus, no separate calibration trajectory was added.}

To quantify the strength of deformation transmission, we define the \ac{dtr} $\alpha_t$ as the ratio between the mean target and proxy feature displacements:
\[
\alpha_t =
\frac{
\frac{1}{N_\mathrm{p}}\sum_{i=1}^{N_\mathrm{p}}
\left\|\Delta\mathbf{p}_{i,t}\right\|_2
}{
\frac{1}{N_\mathrm{g}}\sum_{j=1}^{N_\mathrm{g}}
\left\|\Delta\mathbf{g}_{j,t}\right\|_2+\epsilon_{\mathrm{DTR}}
},
\]
where $N_\mathrm{p}$ and $N_\mathrm{g}$ denote the numbers of target and proxy features, respectively, and $\epsilon_{\mathrm{DTR}}$ is a small positive constant for numerical stability.
A \ac{dtr} close to zero indicates little target motion relative to proxy deformation, as may occur in the Decoupled or Blocked Mode, whereas a larger \ac{dtr} indicates stronger deformation transmission.
In simulation, the ground-truth transmission mode is determined directly from the known adhesion and collision conditions, while the \ac{dtr} is used to quantify the strength of deformation transmission.

\subsection{Setup and Latent Mode Synthesis}
To evaluate the system’s behaviour under structural uncertainty, we synthesized three transmission modes corresponding to different physical interaction regimes.

The Decoupled Mode ($\tau = 0$) was simulated by disabling predefined adhesion nodes between the stomach and pancreas. In this setting, only a transient sticking force may be triggered upon contact detection, resulting in negligible deformation transmission and effectively modelling a loss of coupling.
In Coupled Mode ($\tau\! =\! 1$) , predefined adhesion nodes between the stomach and pancreas remained active, allowing proxy deformation to be consistently transmitted to the target organ through viscoelastic coupling.
The Blocked Mode ($\tau = 2$) \hl{was instantiated by introducing a rigid collision object along the expected deformation path of the pancreas, while preserving the adhesive coupling used in the Coupled Mode. 
In these trials, the Blocked Mode is active when the target engages the external constraint and need not persist throughout the entire trial.}
This configuration constrains target motion through external contact, despite ongoing proxy deformation.

As shown in Fig. \ref{fig:simulation}, we conducted representative simulation trials to characterize the system's temporal response under each transmission mode.
The figure visualizes the evolution of the kinematic observables, specifically the proxy deformation $\Delta \mathbf{g}$ and target motion $\Delta \mathbf{p}$, alongside the real-time inference of the posterior belief $b_t(\tau)$.
In the Decoupled and Blocked scenarios, while the target displacements are similarly negligible ($\Delta \mathbf{p} \approx \mathbf{0}$), the active sensing strategy elicits distinct signatures in the proxy's deformation.
When the latent transmission mode $\tau=1$ is detected, proxy deformation is transmitted to the target, enabling the controller to drive the target toward the desired shape.

\subsection{\hl{Ablation of Probing Actions}}
\label{sec:mode_identification}

\hl{To isolate the contribution of ambiguity-triggered probing, we compared CADeT with a no-additional-probing variant using the same initialization, Bayesian filter in} \eqref{eq:bayes}, \hl{and simulation settings.
The baseline relied only on task-directed control responses to update the mode belief, whereas CADeT additionally applied probing actions when the belief remained ambiguous.
All other inference components were kept unchanged.}
We conducted $N=150$ randomized trials (\num{50} per mode), with the ground-truth transmission mode withheld from the inference module.
\hl{A Blocked trial was considered correctly identified if evidence for the Blocked Mode was observed during constraint engagement; the mode need not remain Blocked thereafter.}

\subsubsection{\hl{Classification Accuracy}}
Fig.~\ref{fig:confusion_matrix} \hl{presents the trial-level confusion matrices,
with each row normalized over 50 trials for the corresponding interaction condition.}
The \hl{no-probing variant} misclassified \SI{92}{\percent} of the Blocked Mode trials ($\tau=2$) as the Decoupled Mode ($\tau=0$).
This result is consistent with Proposition~\ref{prop:indistinguishability}: when no additional probing action is applied and $\Delta \mathbf{p} \approx \mathbf{0}$, \hl{task-directed observations may be insufficient to distinguish the Decoupled and Blocked modes.}
In contrast, \hl{the full CADeT framework achieved an overall trial-level identification accuracy of} \SI{99.3}{\percent} (\num{149}/\num{150} trials).
\hl{The selected probing actions elicited distinguishable proxy-deformation responses in the Decoupled and Blocked modes, allowing the Bayesian filter to separate the two hypotheses and reduce observational ambiguity.}

\begin{figure}[!hbt]
    \centering
    \addtolength{\abovecaptionskip}{-18pt}
    \input{confusion}
    \caption{\hl{Confusion matrices comparing mode-identification accuracy between the no-additional-probing and the full CADeT framework across the Decoupled, Coupled, and Blocked modes.}}
    \label{fig:confusion_matrix}
\end{figure}

\subsubsection{\hl{Belief Convergence}}
\hl{We further evaluated the convergence of the mode belief by tracking its Shannon entropy over the simulation horizon:}
\[
H(b_t) = -\sum_{k=0}^{2} b_t(k)\ln b_t(k).
\]
As shown in Fig.~\ref{fig:entropy}, CADeT reduced the mode-belief entropy across all three transmission modes, eventually falling below \SI{0.25}{nat}.
By contrast, the \hl{no-probing variant} remained uncertain in the Decoupled and Blocked modes, with entropy near \SI{0.65}{nat} because \hl{task-directed observations were insufficient to reliably distinguish these two modes.}
\hl{Together with the classification results, this ablation shows that probing improves both mode-identification accuracy and belief convergence under ambiguous conditions.}

\begin{figure}[!hbt]
    \centering
    \addtolength{\abovecaptionskip}{-20pt}
    \begin{tikzpicture}
    \footnotesize
    \begin{axis}[
        name=position2,
        xlabel={Time (\unit{\second})}, % Enclosed in braces for safety
        ylabel={Shannon Entropy $H(b_t)$ (\unit{nat})}, % Removed negative spacing for clarity
        xmin=0, xmax=38,
        ymin=0, ymax=1.2,
        xtick={5,10,...,35},
        ytick={0,0.2,...,1.2},
        xtick pos=left,
        ytick pos=left,
        width=1.06\linewidth,
        height=0.53\linewidth,
        % --- LEGEND STYLE ---
        legend style={
            at={(1,1)}, 
            anchor=north east, 
            nodes={scale=0.8, transform shape},
            legend columns=3,
            legend cell align=left,
            legend image post style={xscale=0.6},
            column sep=1pt, 
            /tikz/column 2/.style={column sep=10pt}, % Space between cols
            draw=none, % Optional: removes the box border for a cleaner look
            fill=none  % Optional: removes background fill
        }
    ]

    % --- ACTUAL DATA (Hidden from legend via 'forget plot') ---
    % Mode 0 (Red)
    \addplot[smooth, color=red, thick, forget plot] table [x index=0, y index=1] {data/simulation_tau0_entropy.txt};
    \addplot[dashed, smooth, color=red, thick, forget plot] table [x index=0, y index=1] {data/simulation_tau0_entropy_passive.txt};

    % Mode 1 (Orange)
    \addplot[smooth, color=techorange, thick, forget plot] table [x index=0, y index=1] {data/simulation_tau1_entropy.txt};
    \addplot[dashed, smooth, color=techorange, thick, forget plot] table [x index=0, y index=1] {data/simulation_tau1_entropy_passive.txt};

    % Mode 2 (Blue)
    \addplot[smooth, color=techblue, thick, forget plot] table [x index=0, y index=1] {data/simulation_tau2_entropy.txt};
    \addplot[dashed, smooth, color=techblue, thick, forget plot] table [x index=0, y index=1] {data/simulation_tau2_entropy_passive.txt};

    % --- CUSTOM LEGEND ENTRIES ---
    % Column 1: The Modes (Colors)
    \addlegendimage{no markers, red, thick}
    \addlegendentry{Decoupled}
    
    \addlegendimage{no markers, techorange, thick}
    \addlegendentry{Coupled}

    \addlegendimage{no markers, techblue, thick}
    \addlegendentry{Blocked}

    \addlegendimage{no markers, black, thick} % Generic solid line
    \addlegendentry{CADeT}
    
    \addlegendimage{no markers, black, dashed, thick} % Generic dashed line
    \addlegendentry[overlay]{No Additional Probing}

    \end{axis}
\end{tikzpicture}
    \caption{\hl{Evolution of the mode-belief entropy for the no-probing variant and the full CADeT framework across the three transmission modes. Lower entropy indicates higher confidence in the inferred mode.}}
    \label{fig:entropy}
\end{figure}

\subsection{\hl{Ablation of Probing Policy}}
\label{sec:probing_policy}

\subsubsection{\hl{Quantitative Comparison}}
\hl{To isolate the contribution of the information-guided probing policy, we compared the full CADeT framework with a random-probing variant.
Both variants used the same mode-conditioned likelihoods, Bayesian belief update, GP coupling model, probing trigger, and feasible probing set} $\mathcal{U}_{\mathrm{probe}}$.
\hl{The only difference was the selection of the probing action: CADeT selected} $\delta\mathbf{u}^{*}$ using the information-gain criterion in \eqref{eq:sample_based_approx}, whereas the random-probing variant selected a feasible probing action uniformly at random.

\hl{We evaluated both variants using the same} $N=150$ \hl{randomized trials} (\num{50} \hl{per transmission mode) and identical initial conditions.
The evaluation considered overall mode-identification accuracy, Blocked-mode recall, belief-convergence time, and probe count. 
Belief convergence was defined as the first time at which}
$H(b_t)<0.25$~nat.
\hl{The latter two metrics are reported as mean $\pm$ standard deviation, with probe count excluding the initial} $N_0=10$ \hl{responses.}

\begin{table}[!hbt]
    \centering
    \caption{\hl{Ablation of the probing policy.}}
    \label{tab:probing_ablation}
    \setlength{\tabcolsep}{4pt}
    \begin{tabular}{lcccc}
    \hline
    \textbf{Method}
    & \makecell{Mode ID.\\Accuracy (\%)}
    & \makecell{Blocked\\Recall (\%)}
    & \makecell{Belief Conv.\\Time (s)}
    & \makecell{Probe\\Count} \\
    \hline
    Random probing
    & $94.0$
    & $90.0$
    & $14.9 \pm 1.5$
    & $15.5 \pm 4.1$ \\
    CADeT
    & $\mathbf{99.3}$
    & $\mathbf{98.0}$
    & $\boldsymbol{13.2 \pm 0.4}$
    & $\boldsymbol{6.7 \pm 1.1}$ \\
    \hline
    \end{tabular}
\end{table}

\hl{Table}~\ref{tab:probing_ablation} \hl{shows that information-guided probing increased the overall mode-identification accuracy from} \SI{94.0}{\percent} \hl{to} \SI{99.3}{\percent} \hl{while reducing the number of additional probing actions from} $15.5 \pm 4.1$ \hl{to} $6.7 \pm 1.1$.
\hl{Thus, informative action selection improves mode identification with fewer probing interactions.}

\subsubsection{\hl{Random-Probing Variability}}

\hl{To examine the variability of random probing, we analysed} \num{50}
\hl{trials in a fixed Coupled Mode scenario.}
The probing actions $\delta\mathbf{u}$ were uniformly sampled from
\hl{the feasible probing set under the same magnitude constraints.}
Fig.~\ref{fig:trajectories} \hl{shows representative mode-belief trajectories on the 2-simplex.}

\begin{figure*}[!hbt]
    \centering
    \addtolength{\abovecaptionskip}{-20pt}
    \input{trajectories}
    \caption{Representative mode-belief trajectories on the 2-simplex under random probing in the Coupled Mode.
    The vertices denote certainty in the Decoupled ($\tau=0$), Coupled ($\tau=1$), and Blocked ($\tau=2$) modes, and the colour gradient indicates temporal progression from the initial uniform belief (green circle) to the final belief (red square).
    (a) and (b) show rapid convergence, whereas (c) shows a slower trajectory with temporary ambiguity.}
    \label{fig:trajectories}
\end{figure*}

\hl{The trajectories demonstrate the direction-dependent variability of random probing.
Distinctive tissue responses led to rapid convergence toward the ground-truth mode}
(Fig.~\ref{fig:trajectories}a, b),
\hl{whereas temporarily ambiguous responses resulted in slower belief updates}
(Fig.~\ref{fig:trajectories}c).
These observations are consistent with Table~\ref{tab:probing_ablation}:
\hl{random probing can provide useful mode information, but the information-guided policy reduces its dependence on probing direction by favouring actions with more distinguishable predicted responses.}

\subsection{Robustness and Sensitivity Analysis}

\subsubsection{Blocked-Response Attenuation}

\hl{We further evaluated the sensitivity of mode inference to the attenuation factor} $\lambda_{\mathrm{b}}$ \hl{used to construct} $P_{\mathrm{stiff}}$ \hl{from the locally initialized nominal response.
The same} \num{150} trials used in Section~\ref{sec:mode_identification} were re-evaluated with
$\lambda_{\mathrm{b}}\in\{0.2,0.4,0.6,0.8,0.95\}$,
\hl{while all other inference parameters and the initial response window} $N_0$ \hl{were kept unchanged.
For each setting, we measured the overall mode-identification accuracy, Blocked-mode recall, Blocked-to-Decoupled confusion rate, and the time for mode-belief entropy to fall below} \SI{0.25}{nat}.

\hl{As summarized in Table}~\ref{tab:lambda_sensitivity},
\hl{the inference performance was most stable for}
$\lambda_{\mathrm{b}}=0.4$--$0.6$.
\hl{We therefore used} $\lambda_{\mathrm{b}}=0.4$ \hl{in the subsequent experiments.
Values closer to one reduced the separation between}
$P_{\mathrm{n}}$ \hl{and} $P_{\mathrm{stiff}}$,
\hl{increasing Blocked-to-Decoupled confusion, whereas the smaller value}
$\lambda_{\mathrm{b}}=0.2$
\hl{required longer to accumulate consistent evidence.}

\begin{table}[!hbt]
\centering
\caption{\hl{Sensitivity of mode inference to the blocked-response attenuation factor} $\lambda_{\mathrm{b}}$.}
\label{tab:lambda_sensitivity}
\setlength{\tabcolsep}{4pt}
\begin{tabular}{ccccc}
\hline
$\boldsymbol{\lambda}_{\mathrm{b}}$
& \makecell{Mode\\Accuracy (\%)}
& \makecell{Blocked\\Recall (\%)}
& \makecell{B$\rightarrow$D\\Rate (\%)}
& \makecell{Belief Conv.\\Time (s)} \\
\hline
$0.2$  & $98.0$  & $94.0$  & $4.0$  & $15.4 \pm 0.5$ \\
$0.4$  & $\boldsymbol{99.3}$ & $\boldsymbol{98.0}$ & $\boldsymbol{0.0}$ & $\boldsymbol{13.2 \pm 0.4}$ \\
$0.6$  & $98.7$ & $96.0$ & $4.0$ & $13.7 \pm 0.3$ \\
$0.8$  & $95.3$  & $86.0$  & $14.0$ & $17.9 \pm 2.1$ \\
$0.95$ & $91.3$  & $82.0$  & $18.0$ & $20.2 \pm 2.4$ \\
\hline
\end{tabular}
\end{table}

\subsubsection{\hl{Robustness to Early Mode Misclassification}}
\hl{We evaluated whether transient mode-inference errors could affect manipulation by corrupting the online GP coupling model.
A fixed Coupled-mode scenario was tested under three conditions: clean reference, direct GP updating, and confidence-gated updating.
For the latter two,} $M=4$ \hl{non-Coupled observations were introduced early and temporarily misclassified as Coupled.
Direct updating admitted them immediately, whereas the same confidence gate was used before updating the GP.
All subsequent observations were ground-truth Coupled.}

\hl{Fig.}~\ref{fig:gp_contamination} \hl{shows representative target-shape error trajectories.
Direct updating caused slower convergence after incorporating the incorrect observations, whereas confidence-gated updating remained close to the clean reference by rejecting them.
These results show that the confidence gate reduces the effect of early mode misclassification on subsequent manipulation.}

\begin{figure}[!hbt]
    \centering
    \addtolength{\abovecaptionskip}{-20pt}
    \begin{tikzpicture}
    \footnotesize
    \begin{axis}[
        xlabel={Time (\unit{\second})},
        ylabel={Mean Shape Error $e_t$ (\unit{\milli\meter})},
        xmin=0, xmax=38,
        ymin=0, ymax=72,
        xtick pos=left, ytick pos=left,
        xtick={5,10,...,35},
        ytick={0,20,...,60},
        width=1.06\linewidth,
        height=0.53\linewidth,
        legend pos=north east,
        legend style={
            nodes={scale=0.8, transform shape},
        }
    ]

    \addplot[
        red,
        thick,
        % dashed
    ] table [x index=0, y index=4] {data/simulation_tau1.txt};

    \addlegendentry{Gated}

    \addplot[
        techblue,
        thick,
        dashed
    ] table [x index=0, y index=1] {data/sim_phamton_tau1_clean.txt};
    \addlegendentry{Clean}

    \addplot[
        techorange,
        thick,
        % dashed
    ] table [x index=0, y index=1] {data/sim_phantom_tau1_directupdate.txt};
    \addlegendentry{Direct update}

    \end{axis}
\end{tikzpicture}
    \caption{\hl{Shape-control robustness to early mode misclassification under different GP update strategies.}}
    \label{fig:gp_contamination}
\end{figure}

\subsection{\hl{Component Ablation for Manipulation}}
\label{sec:component_ablation}

\hl{The preceding ablations isolate the effects of the additional
probing actions and their information-guided selection.
We next evaluated the contributions of the mode-conditioned
manipulation model, the GP coupling model, and the belief-aware MPC.
All variants were tested under the same coupling conditions, initial
configurations, desired target configurations, and controller limits
used for the full CADeT framework.}

\hl{Three variants were considered.
First, in the \emph{No mode conditioning} variant, the active sensing and mode-inference procedures were retained, but the inferred mode was not used to condition the coupling model or target prediction.
All observations were used for GP updating, and the coupling belief $b_t(\tau=1)$ in the MPC prediction was replaced by unity.
Second, the \emph{Hard-mode MPC} variant retained the same mode inference and GP model but replaced the continuous coupling belief by the binary maximum a posteriori estimate} $\mathbb{I}(\hat{\tau}_t=1)$ where $\hat{\tau}_t=\arg\max_{\tau} b_t(\tau)$.
\hl{Third, in the \emph{Jacobian coupling} variant, the GP coupling model was replaced by the recursive Jacobian estimate used in the Jacobian-based controller, while mode inference, probing, and belief-based modulation were retained.}

\begin{table*}[t]
\centering
\caption{\hl{Component ablation of mode conditioning, belief weighting, and GP-based coupling estimation.}}
\label{tab:component_ablation}
\setlength{\tabcolsep}{7pt}
\begin{tabular}{lcccccc}
\hline
\textbf{Variant}
& Mode Inference
& GP Coupling
& Belief Weighting
& Success Rate
& Task Time (s)
& Average DTR \\
\hline
No mode conditioning
& Yes & Yes & None & $84.0\%$ & $37.7 \pm 4.2$ & $7.54\%$ \\

Hard-mode MPC
& Yes & Yes & Hard & $90.0\%$ & $ 45.1 \pm 5.1$ & $8.67\%$ \\

Jacobian coupling
& Yes & No & Soft & $86.0\%$ & $ 38.6 \pm 3.7$ & $7.82\%$ \\

\textbf{Full CADeT}
& Yes & Yes & Soft
& $\mathbf{98.0\%}$
& $\mathbf{30.1 \pm 2.7}$
& $\mathbf{11.26\%}$ \\
\hline
\end{tabular}
\end{table*}

\hl{Table}~\ref{tab:component_ablation} \hl{summarizes the manipulation performance of the component ablations.
The No mode conditioning, Hard-mode MPC, and Jacobian coupling variants achieved success rates of}
\SI{84.0}{\percent}, \SI{90.0}{\percent}, \hl{and} \SI{86.0}{\percent},
\hl{with task times of}
$37.7 \pm 4.2$~\si{\second}, $45.1 \pm 5.1$~\si{\second}, \hl{and}
$38.6 \pm 3.7$~\si{\second},
\hl{respectively.
Full CADeT achieved the highest success rate of} \SI{98.0}{\percent}
\hl{and average DTR of} \SI{11.26}{\percent},
\hl{with the shortest task time of} $30.1 \pm 2.7$~\si{\second}.

\subsection{\hl{Comparison with Existing Methods}}
\label{sec:baseline_comparison}

\hl{We compared CADeT with four representative
deformable-object manipulation methods: the Jacobian-based controller}
\cite{navarro2013model}, GP-WRM \cite{hu2023occlusion}, online GPR \cite{hu2018three}, and the FEM-based controller \cite{saghour2025dual}.
\hl{All methods were evaluated on identical simulated indirect manipulation tasks and coupling conditions.
The baselines did not use explicit transmission-mode inference or mode-dependent switching, whereas CADeT continuously updated the mode belief during manipulation.}

\hl{The comparison evaluates manipulation performance under different
physical coupling strengths.
In simulation, the coupling between the proxy and target was varied by
adjusting the density of connection nodes at the tissue interface.}

Fig.~\ref{fig:dtr} shows the temporal evolution of the \ac{dtr}, while Table~\ref{tab:comparison} summarizes the aggregate manipulation performance.
\hl{CADeT achieved the highest success rate, normalized convergence-speed factor, and average} \ac{dtr} among the evaluated methods.
\hl{Because this comparison evaluates the complete frameworks, the contributions of probing, probing-action selection, and the manipulation components are isolated separately in Sections}~\ref{sec:mode_identification},
\ref{sec:probing_policy}, \hl{and the component ablation, respectively.}

\begin{figure}[!hbt]
    \centering
    \addtolength{\abovecaptionskip}{-20pt}
    \begin{tikzpicture}
    \footnotesize
    \begin{axis}[
        name=position2,
        xlabel=Time (\unit{\second}),
        ylabel=DTR $ \alpha_t$ (\%),
        xmin=0, xmax=10,
        ymin=0, ymax=24,
        xtick={0,2,...,10},
        ytick={0,6,12,...,24},
        xtick pos=left,
        ytick pos=left,
        width=1.05\linewidth,
        height=0.525\linewidth,
        legend style={
            nodes={scale=0.8, transform shape},
            legend columns=3
        }
    ]

    \addplot[smooth,color=red, thick]  table  [x index=0, y index=1] {data/CADeT_DTR.txt};
    \addlegendentry{CADeT}
    \addplot[smooth,color=techorange, thick]  table  [x index=0, y index=1] {data/GPWRM_DTR.txt};
    \addlegendentry{GP-WRM}

    \addplot[smooth,color=techblue, thick]  table  [x index=0, y index=1] {data/GPR_DTR.txt};
    \addlegendentry{Online GPR}

    \addplot[smooth,color=blue, thick]  table  [x index=0, y index=1] {data/Jacobian_DTR.txt};
    \addlegendentry{Jacobian}

    \addplot[smooth,color=pink, thick]  table  [x index=0, y index=1] {data/FEM_DTR.txt};
    \addlegendentry{FEM}
    
    \end{axis}

\end{tikzpicture}
    \caption{\hl{Deformation transmission ratio achieved by CADeT and
    the baseline controllers. A higher ratio indicates more effective
    transmission of proxy deformation to the target.}}
    \label{fig:dtr}
\end{figure}

\begin{table}[!hbt]
\centering
\caption{\hl{Comparison with existing manipulation methods under the
tested coupling conditions.}}
\label{tab:comparison}
\begin{tabular}{lccc}
\hline
\textbf{Method}
& Success Rate
& Conv. Speed
& Average DTR \\
\hline
Jacobian-based \cite{navarro2013model}
& $42.0\%$ & $1.0\times$ & $4.25\%$ \\

GP-WRM \cite{hu2023occlusion}
& $68.0\%$ & $1.17\times$ & $5.27\%$ \\

Online GPR \cite{hu2018three}
& $75.0\%$ & $1.20\times$ & $5.22\%$ \\

FEM-based \cite{saghour2025dual}
& $75.0\%$ & $0.61\times$ & $3.92\%$ \\

\textbf{CADeT (Ours)}
& $\mathbf{98.0\%}$
& $\mathbf{2.2\times}$
& $\mathbf{11.26\%}$ \\
\hline
\end{tabular}
\end{table}

\hl{CADeT also achieved the highest normalized convergence-speed factor
among the evaluated methods.
These comparisons evaluate the complete CADeT framework and do not by
themselves isolate the source of the performance differences.
The contributions of probing actions, probing-action selection, and
the manipulation components are examined separately in}
Sections~\ref{sec:mode_identification},
\ref{sec:probing_policy}, and
\ref{sec:component_ablation}, respectively.

\section{Physical \acs{ramis} Experiments}

\subsection{Experiment Setup}
The experimental validation was conducted on the \ac{dvrk} platform \cite{kazanzides2014open}, using two 7-degree-of-freedom \acp{psm} and one 4-degree-of-freedom \ac{ecm}, as shown in Fig.~\ref{fig:setup}.
The scene was observed through a stereoscopic endoscope providing $1080 \times 720$ RGB video.
To replicate realistic anatomical properties, we used soft-tissue phantoms (HumanX Medical LLC, U.S.) mimicking the viscoelastic behaviour of the liver, colon, and pancreas, alongside ex vivo porcine tissues for biological validation.
\hl{Grasp locations were predefined on accessible proxy-tissue regions to ensure stable grasping and sufficient manipulation range.
For the nominal reachable cases in Figs.}~\ref{fig:pancreas_exp} and \ref{fig:mesentry_exp}, \hl{the desired target configurations were selected within the local deformation range of each grasp.
Additional reachability-limited cases are evaluated separately in Section}~\ref{sec:reachability}.

\hl{At the beginning of each physical trial, the first} $N_0=10$ \hl{probing responses were buffered to initialize} $P_{\mathrm{n}}$ \hl{for the current grasping region.
These responses were generated by the initial probing actions already included in active sensing and required no separate calibration motion.
For the Blocked-mode experiments, initialization was completed before the target reached the external constraint.
The same attenuation factor} $\lambda_{\mathrm{b}}=0.4$ \hl{selected in simulation was used to construct} $P_{\mathrm{stiff}}$ \hl{for all phantom and ex vivo experiments.
Both proxy-response distributions remained fixed within each trial and were reinitialized when the tissue sample or grasping region changed.}

\begin{figure}[!hbt]
    \centering
    \addtolength{\abovecaptionskip}{-15pt}
    \includegraphics[width=1\linewidth]{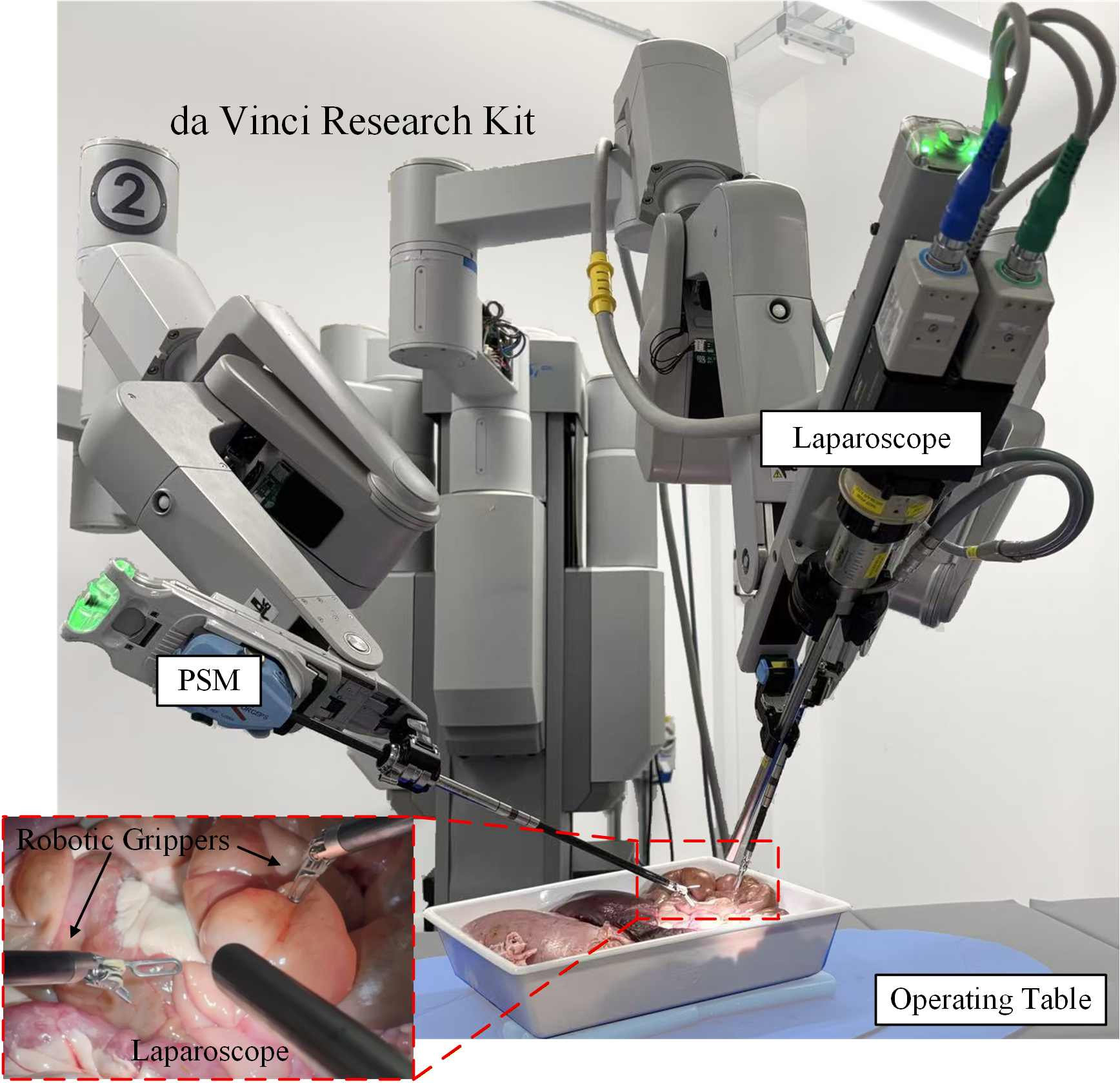}
    \caption{Experimental setup for the \acs{ramis} validation on the \acs{dvrk} platform, detailing the phantom and ex vivo environments.}
    \label{fig:setup}
\end{figure}

\begin{figure}[!hbt]
    \centering
    \addtolength{\abovecaptionskip}{-15pt}
    \input{sam_dino}
    \caption{\hl{SAM2--DINOv3 perception pipeline for marker-free tissue segmentation, feature tracking, and shape-state estimation.}}    \label{fig:sam_dino}
\end{figure}

\begin{figure*}[!hbt]
    \centering
    \addtolength{\abovecaptionskip}{-10pt}
    \input{pancreas_exp}
    \caption{The left columns show time-series snapshots of the surgical scene, with the stomach proxy represented by the red boundary and red feature points, the pancreatic target by the green boundary and green feature points, and yellow points denoting the desired target configuration. 
    The right columns show the mode belief $b_t(\tau)$ as a stacked area chart together with the mean target-shape error $e_t$ shown by the red solid line.}
    \label{fig:pancreas_exp}
\end{figure*}

\begin{figure*}[!hbt]
    \centering
    \addtolength{\abovecaptionskip}{-10pt}
    \input{mesentry_exp}
    \caption{The left columns show time-series snapshots of the surgical scene, with the colon proxy represented by the red boundary and red feature points, the mesentery target by the green boundary and green feature points, and yellow points denoting the desired target configuration. 
    The right columns show the mode belief $b_t(\tau)$ as a stacked area chart together with the mean target-shape error $e_t$ shown by the red solid line.}
    \label{fig:mesentry_exp}
\end{figure*}

\hl{To enable marker-free visual state estimation for shape servoing, the perception pipeline combines SAM2} \cite{ravi2024sam} and DINOv3 \cite{simeoni2025dinov3}, as shown in Fig.~\ref{fig:sam_dino}.
\hl{SAM2 is used to segment the proxy and target tissues from each endoscopic frame, after which DINOv3 descriptors are used to track selected tissue feature points across frames.}
\hl{The tracked feature-point locations define the current proxy and target states,} $\mathbf{g}_t$ and $\mathbf{p}_t$, while the desired target state $\mathbf{p}_{\mathrm d}$ \hl{is manually specified before each trial. }
\hl{The corresponding shape error $e_t$ is then provided to the causal-aware deformation transmission controller.}

\subsection{Pancreatic Exposure via Stomach Manipulation}

In pancreatic surgery, adhesions between the stomach and pancreas can make direct pancreatic manipulation undesirable, motivating indirect manipulation through the stomach.
In this experiment, the stomach serves as the proxy, while the pancreas is the \hl{indirectly manipulated target whose visible surface is tracked by the perception system}.

We first validated the framework using phantom stomach and pancreas models coupled by a pressure-sensitive adhesive, followed by ex vivo trials using fresh porcine tissue to evaluate performance under biological heterogeneity.

As shown in Fig.~\ref{fig:pancreas_exp}, \hl{probing reduced the initial mode uncertainty and identified the Coupled Mode.
The deformation transmission controller then used the estimated adhesion Jacobian} $\mathbf{J}_{\mathrm{adh}}$ \hl{to manipulate the stomach and retract the pancreas toward the desired configuration.}

\subsection{Mesentery Manipulation}

The compliance and fragility of the mesentery make direct manipulation challenging, motivating indirect manipulation through the attached intestine.
In this experiment, the colon serves as the proxy, while the mesentery is the indirectly manipulated target.

The shape-control objective was to flatten a folded mesenteric section or retract it to a specified spatial configuration to reveal underlying structures.
In the phantom study, a simplified model was constructed using artificial small and large intestines, with the large intestine serving as the proxy.

As shown in Fig.~\ref{fig:mesentry_exp}, the system successfully completed the indirect manipulation task.
\hl{The mode belief converged to the Coupled Mode, allowing the transmission model to be updated during manipulation.}
The high compliance of the colon resulted in lower \ac{dtr}, requiring larger proxy motions to produce the desired target deformation.

\hl{We further assessed robot-motion settlement.
Over the final} $N_{\rm s}$ \hl{control cycles of each trial that satisfied the convergence criterion, the mean Cartesian displacement command magnitude was}
$0.48 \pm $\SI{0.07}{\milli\meter},
\hl{with residual motion remaining below the predefined settlement threshold} $\epsilon_{\rm u}$.

\subsection{Reachability and Failure Case} \label{sec:reachability}
\hl{To complement the representative trajectories in Figs.}~\ref{fig:pancreas_exp}
\hl{and} \ref{fig:mesentry_exp},
\hl{Table}~\ref{tab:physical_results}
\hl{summarizes the physical experiments in terms of the inferred transmission mode and target-shape error.
The ground-truth mode was determined from the known attachment and blocking conditions.}

\begin{table}[!t]
\centering
\caption{\hl{Quantitative results of the physical RAMIS experiments.}}
\label{tab:physical_results}
\setlength{\tabcolsep}{3.0pt}
\begin{tabular}{lccccc}
\hline
\textbf{Case}
& Tissue
& \makecell{Test\\Condition} 
& \makecell{Mode\\Identified}
& \makecell{Initial\\Error (mm)}
& \makecell{Final\\Error (mm)} \\
\hline

Pancreas (a)
& Ex vivo
& Decoupled
& $\checkmark$
& \num{39.8}
& \num{36.6} \\

Pancreas (b)
& Ex vivo
& Blocked
& $\checkmark$
& \num{57.6}
& \num{2.8} \\

Pancreas (c)
& Phantom
& Coupled
& $\checkmark$
& \num{45.3}
& \num{2.3} \\

Pancreas (d)
& Phantom
& Decoupled
& $\checkmark$
& \num{52.1}
& \num{53.5} \\

Mesentery (a)
& Ex vivo
& Coupled
& $\checkmark$
& \num{36.8}
& \num{2.7} \\

Mesentery (b)
& Ex vivo
& Decoupled
& $\checkmark$
& \num{43.8}
& \num{43.8} \\

Mesentery (c)
& Phantom
& Coupled
& $\checkmark$
& \num{38.0}
& \num{2.1} \\

Mesentery (d)
& Phantom
& Blocked
& $\checkmark$
& \num{42.5}
& \num{1.9} \\

\hline
\multicolumn{6}{l}{\hl{\textit{Additional constrained/reachability cases}}} \\

Pancreas (e1)
& Ex vivo
& Coupled
& $\checkmark$
& \num{45.8}
& \num{16.5} \\

Pancreas (e2)
& Phantom
& Coupled
& $\checkmark$
& \num{37.8}
& \num{21.5} \\

Mesentery (e1)
& Ex vivo
& Coupled
& $\checkmark$
& \num{57.1}
& \num{32.5} \\

Mesentery (e2)
& Phantom
& Blocked
& $\checkmark$
& \num{50.2}
& \num{34.1} \\

\hline
\end{tabular}
\end{table}

\hl{The expected transmission condition was identified in all} \num{12} \hl{cases, with Blocked cases counted as correct when evidence for the Blocked Mode was observed during local constraint engagement rather than throughout the entire trial.}
\hl{For the reachable Coupled cases in Figs.}~\ref{fig:pancreas_exp} \hl{and} \ref{fig:mesentry_exp},
\hl{the mean shape error decreased from} $36.8$--\SI{45.3}{\milli\meter}
\hl{to} $2.1$--\SI{2.7}{\milli\meter}, \hl{whereas the Decoupled cases showed little or no error reduction because deformation was not transmitted to the target.}

\hl{The additional cases were intentionally selected to examine limitations caused by local reachability and external constraints.
In the three Coupled cases, the transmission mode was correctly identified and the shape error decreased, but residual errors of} \num{16.5}--\SI{32.5}{\milli\meter} \hl{remained because the requested target configurations were outside the locally achievable deformation range under the current grasp.
The Blocked case similarly retained a residual error of} \SI{34.1}{\milli\meter} under the external constraint. These results distinguish limited target reachability from failure of mode inference.

\section{Discussion}
\hl{Evaluations in simulation and on the dVRK support the feasibility of CADeT for indirect soft-tissue manipulation under the tested conditions.
The simulation ablations show that additional probing helps resolve the observational ambiguity between the Decoupled and Blocked modes, which may produce similar target-motion responses during task-directed control.
Random probing could also provide useful mode information but showed greater direction-dependent variability, whereas information-guided probing improved the consistency and efficiency of mode inference by favouring actions with more distinguishable predicted responses.}

\hl{The shape-control results further suggest that CADeT benefits from conditioning target prediction on the inferred transmission mode.
When the belief concentrates on the Coupled Mode, the controller uses the learned adhesion Jacobian to predict target deformation; as the Coupled-Mode belief decreases, the belief-weighted transmission gain is reduced, discouraging corrective actions with limited predicted effect on the target.
This behaviour is consistent with the higher task-success rates and faster convergence observed in the component ablations and baseline comparisons.}

\hl{Target-shape convergence is distinguished from robot-motion settlement.
Because perception noise, online model updates, and viscoelastic relaxation can induce residual corrections after the target error enters the desired range, task completion additionally requires the commanded end-effector increment to satisfy the settlement criterion in} \eqref{eq:settlement_condition} for $N_{\rm s}$ \hl{consecutive control cycles, after which the robot holds its current pose.
This criterion demonstrates closed-loop settlement under the tested conditions rather than a general guarantee of asymptotic stability.}

Despite these advantages, the framework has several limitations.
\hl{Mode-inference accuracy depends on the strength of the physical coupling between the proxy and target; when adhesion is extremely weak, transmitted deformation may become indistinguishable from measurement noise, resulting in slower convergence or misidentification.
The current framework also assumes a predefined grasp and does not address grasp planning or arbitrary target reachability.
Even in the Coupled Mode, achievable target deformation is limited by the current grasp and local transmission characteristics, potentially leaving residual error for unreachable targets.
The local transmission model relies on a first-order approximation and may become inaccurate under large deformations or non-reversible tissue changes.
The framework also provides no formal tissue-safety guarantee because contact force and tissue strain are neither directly measured nor explicitly constrained; such biomechanical constraints would be required for safety-critical clinical deployment.
Finally, performance under dynamic environmental constraints and unpredictable external disturbances remains to be investigated.}

\section{Conclusion}
This paper introduced a causal-aware framework for the indirect manipulation of deformable anatomy in \ac{ramis}.
By formulating the interaction as a \ac{scm} and using active sensing to resolve ambiguity in transmission-mode inference, we demonstrated that \hl{a robot can indirectly control a partially visible target that is inaccessible to direct manipulation by actuating an accessible proxy tissue.}
The proposed deformation transmission control strategy, CADeT, integrates mode inference with belief-aware model predictive control.
Validation in simulation and on the dVRK using phantom and ex vivo porcine tissues showed that the framework can distinguish competing transmission modes and achieve indirect shape control for locally reachable target configurations under the evaluated conditions.
Future research will focus on extending the framework to more complex multi-organ transmission chains and incorporating real-time force feedback to complement the visual perception loop.

% use section* for acknowledgment
% \section*{Acknowledgment}
% All the experiments involving human cadaveric tissues were performed under ethical approval from the University of Leeds. 
% The authors would like to thank Intuitive Surgical, Inc. for the donation of the da Vinci system, the STORM Lab technician, Samwise Wilson, for hardware support, and the anatomy facilities technicians of the School of Medicine, Sarah Wilson and Charlotte Coleman, for their support in the cadaveric experiments.

% The authors would like to thank...

% Can use something like this to put references on a page
% by themselves when using endfloat and the captionsoff option.
\ifCLASSOPTIONcaptionsoff
  \newpage
\fi

% trigger a \newpage just before the given reference
% number - used to balance the columns on the last page
% adjust value as needed - may need to be readjusted if
% the document is modified later
%\IEEEtriggeratref{8}
% The "triggered" command can be changed if desired:
%\IEEEtriggercmd{\enlargethispage{-5in}}

% references section

% can use a bibliography generated by BibTeX as a .bbl file
% BibTeX documentation can be easily obtained at:
% http://mirror.ctan.org/biblio/bibtex/contrib/doc/
% The IEEEtran BibTeX style support page is at:
% http://www.michaelshell.org/tex/ieeetran/bibtex/
\bibliographystyle{IEEEtran}
% argument is your BibTeX string definitions and bibliography database(s)
% \bibliography{IEEEabrv,ref}
\bibliography{ref}
%
% <OR> manually copy in the resultant .bbl file
% set second argument of \begin to the number of references
% (used to reserve space for the reference number labels box)
% \begin{thebibliography}{1}

% \bibitem{IEEEhowto:kopka}
% H.~Kopka and P.~W. Daly, \emph{A Guide to \LaTeX}, 3rd~ed.\hskip 1em plus
%   0.5em minus 0.4em\relax Harlow, England: Addison-Wesley, 1999.

% \end{thebibliography}

\appendix[Proof of Proposition \ref{prop:indistinguishability}]
\begin{proof}\label{app:proof_prop1}
In the stationary regime $\mathcal{S}_0$, the target response provides no discriminative information because
$\Delta\mathbf{p}_t \approx \mathbf{0}$
under both the Decoupled and Blocked Modes.
According to \eqref{eq:L=0} and \eqref{eq:L=2}, the two modes therefore differ only in their proxy-response distributions,
\[
P_{\mathrm{n}}(\Delta\mathbf{g}_t\mid\mathbf{u}_t)
=
\mathcal{N}\!\left(
\Delta\mathbf{g}_t;
\boldsymbol{\mu}_{\mathrm{n}}(\mathbf{u}_t),
\boldsymbol{\Sigma}_{\mathrm{n}}
\right)
\]
and
\[
P_{\mathrm{stiff}}(\Delta\mathbf{g}_t\mid\mathbf{u}_t)
=
\mathcal{N}\!\left(
\Delta\mathbf{g}_t;
\lambda_{\mathrm{b}}
\boldsymbol{\mu}_{\mathrm{n}}(\mathbf{u}_t),
\boldsymbol{\Sigma}_{\mathrm{n}}
\right).
\]

Since the two Gaussian distributions share the same covariance,
their KL divergence is
\[
D_{\mathrm{KL}}
=
\frac{1}{2}
(1-\lambda_{\mathrm{b}})^2
\left\|
\boldsymbol{\mu}_{\mathrm{n}}(\mathbf{u}_t)
\right\|_{\boldsymbol{\Sigma}_{\mathrm{n}}^{-1}}^2 .
\]

The passive policy $\pi_{\mathrm{pass}}$ minimizes the tracking error together with the control effort.
In $\mathcal{S}_0$, the predicted target response is negligible under both competing modes, so the control-effort regularization drives
$\mathbf{u}_t\rightarrow\mathbf{0}$.
Assuming the nominal proxy-response model is continuous and satisfies
$\boldsymbol{\mu}_{\mathrm{n}}(\mathbf{0})=\mathbf{0}$,
it follows that
\(
D_{\mathrm{KL}}
\rightarrow 0\).

Therefore, passive observations become asymptotically non-discriminative between the Decoupled and Blocked Modes, which establishes the observational indistinguishability stated in Proposition~\ref{prop:indistinguishability}.
\end{proof}

\end{document}